\documentclass[11pt]{article}

\usepackage{fullpage,times,url,natbib}

\usepackage{amsthm,amsfonts,amsmath,amssymb,epsfig,color,float,graphicx,verbatim}
\usepackage{algorithm,algorithmic}

\usepackage{hyperref}
\hypersetup{
	colorlinks   = true, 
	urlcolor     = blue, 
	linkcolor    = blue, 
	citecolor   = black 
}

\newtheorem{theorem}{Theorem}

\newtheorem{lemma}{Lemma}

\newtheorem{assumption}{Assumption}

\newcommand{\reals}{\mathbb{R}}
\newcommand{\E}{\mathbb{E}}

\newcommand{\bx}{\mathbf{x}}
\newcommand{\bw}{\mathbf{w}}

\newcommand{\bv}{\mathbf{v}}

\newcommand{\Ocal}{\mathcal{O}}

\newcommand{\Wcal}{\mathcal{W}}

\newcommand{\norm}[1]{\|#1\|}

\newcommand{\secref}[1]{Sec.~\ref{#1}}

\renewcommand{\eqref}[1]{Eq.~(\ref{#1})}
\newcommand{\lemref}[1]{Lemma~\ref{#1}}

\newcommand{\thmref}[1]{Thm.~\ref{#1}}



\title{Exponential Convergence Time of Gradient Descent\\ for One-Dimensional 
Deep Linear Neural Networks}
\date{}
\author{Ohad Shamir\\Weizmann Institute of Science}

\begin{document}

\maketitle

\begin{abstract}
	We study the dynamics of gradient descent on objective 
	functions of the form $f(\prod_{i=1}^{k} w_i)$ (with respect to scalar 
	parameters $w_1,\ldots,w_k$), which arise in the context of 
	training depth-$k$ linear neural networks. We prove that for standard 
	random initializations, and under mild assumptions on $f$, the number of 
	iterations required for convergence scales exponentially with the depth 
	$k$. We also show empirically that this phenomenon can occur in higher 
	dimensions, where each $w_i$ is a matrix. This highlights a potential 
	obstacle in understanding the convergence of gradient-based methods for 
	deep linear neural networks, where $k$ is large. 
\end{abstract}

\section{Introduction}

One of the biggest open problems in theoretical machine learning is to explain 
why deep artificial neural networks can be efficiently trained in practice, 
using 
simple gradient-based methods. Such training requires optimizing complex, 
highly non-convex objective functions, which seem intractable from a worst-case 
viewpoint. Over the past few years, much research has been devoted to this 
question, but it remains largely unanswered.

Trying to understand simpler versions of this question, significant attention 
has been devoted to \emph{linear} neural networks, which are predictors 
mathematically defined as $\bx\mapsto \prod_{i=1}^{k}W_i\bx$,
with $W_1,\ldots,W_k$ being a set of parameter matrices, and $k$ being the 
depth parameter
(e.g. 
\citet{saxe2013exact,kawaguchi2016deep,hardt2016identity,lu2017depth,bartlett2018gradient,
laurent2018deep}).
The optimization problem associated with training such networks can be 
formulated as
\begin{equation}\label{eq:linopt}
\min_{W_1,\ldots,W_k} F(W_1,\ldots,W_k) := 
f\left(\prod_{i=1}^{k}W_i\right)
\end{equation}
for some matrix-valued function $f$. Although much simpler than general 
feedforward neural networks (which involve additional non-linear functions), it 
is widely believed that \eqref{eq:linopt} captures important aspects of 
neural network optimization problems. Moreover, 
\eqref{eq:linopt} has a simple algebraic structure, which makes it more 
amenable 
to analysis. In particular, it is known that when $f$ is convex and 
differentiable, \eqref{eq:linopt} has no local minima except global ones (see 
\citet{laurent2018deep} and references therein). In other words, if an 
optimization algorithm converges to some local minimum, then it must converge 
to a global minimum.

Importantly, this no-local-minima result \emph{does not} imply that 
gradient-based methods indeed solve \eqref{eq:linopt} efficiently: Even when 
they converge to local minima (which is not always guaranteed, say in case the 
parameters diverge), the number of required iterations might be arbitrarily 
large. To study this question, \citet{bartlett2018gradient} recently considered 
the special case where 
$F(W_1,\ldots,W_k)~:=~\frac{1}{2}\|\prod_{i=1}^{k}W_i-Y\|_{F}^2$
(where $\norm{\cdot}_{F}$ is the Frobenius norm) for square matrices 
$W_1,\ldots,W_k,Y$, using gradient descent starting specifically from $W_i=I$ 
for all $i$. In this setting, the authors prove a polynomial-time convergence 
guarantee when $Y$ is positive semidefinite. On the other hand, when $Y$ is 
symmetric and with negative eigenvalues, it is shown that gradient descent with 
this initialization will never converge. Although these results provide 
important insights, they crucially assume
that each $W_i$ is initialized \emph{exactly} at the identity $I$. Since in 
practice parameters are initialized randomly, it is natural to ask whether 
such results hold with random initialization. Indeed, even though 
gradient descent might fail to converge with a specific initialization, it 
could be that even a tiny random perturbation is sufficient for polynomial-time 
convergence\footnote{For example, consider the 
objective $F(w_1,w_2)=(w_1 w_2+1)^2$ where $w_1,w_2\in \reals$. It is an easy 
exercise to show that gradient descent starting from any $w_1=w_2>0$ 
(and sufficiently small step sizes) will converge to the suboptimal saddle 
point $(0,0)$. 
On the other hand, polynomial-time convergence holds with random initialization 
(see \citet{du2018algorithmic}).}. More recently, \citet{arora2018convergence} 
considered gradient descent on a similar objective, and managed to prove strong 
polynomial-time 
convergence guarantees under certain assumptions about the initialization. 
However, as the authors discuss (in section 3.2.1), these assumptions are not 
generally satisfied for standard initialization approaches. In another recent 
related 
work, \citet{ji2018gradient} show that for certain classification problems on 
linearly separable data (corresponding to a suitable choice of $f$ in 
\eqref{eq:linopt}), gradient descent asymptotically 
converges to a globally optimal objective value. However, the result only 
applies to particular choices of $f$, and more importantly, is asymptotic and 
hence does not imply a finite-time convergence guarantee. Thus, analyzing the 
finite-time convergence of gradient descent on \eqref{eq:linopt}, with 
standard random initializations, remains a challenging open problem. 

In this paper, we consider a simpler special case of \eqref{eq:linopt}, where 
the matrices $W_1,\ldots,W_k$ are all scalars:
\begin{equation}\label{eq:linopt1}
\min_{\bw\in\reals^k} F(\bw) := f\left(\prod_{i=1}^{k} w_i\right)~.
\end{equation}
Our main and perhaps surprising result is that even in 
this relatively simple setting, gradient descent with 
random initialization can 
require $\exp(\Omega(k))$ iterations to converge. This holds under mild 
conditions on the function $f$, and with standard initializations (including 
Xavier initialization and any reasonable initialization close to 
$(1,\ldots,1)$). We complement this by showing that 
$\exp(\tilde{\Ocal}(k))\cdot \max\{1,\log(1/\epsilon)\}$ iterations are also 
sufficient for convergence to an $\epsilon$-optimal point. Moreover, in 
\secref{sec:experiments} we present experiments which 
strongly suggest that this phenomenon is not unique to one-dimensional 
networks, and at least in some cases, the same exponential dependence can also 
occur in  multi-dimensional networks (i.e., \eqref{eq:linopt} where each $W_i$ 
is a $d\times d$ matrix, $d>1$). The take-home message is that even if we focus 
on linear 
neural networks, natural objective functions without any spurious local minima, 
and random initializations, the associated optimization problems can sometime 
be intractable for gradient descent to solve, when the depth is large. 

Before continuing, we emphasize that our results do not imply that 
gradient-based methods cannot learn deep linear networks in general. What they 
do imply is that one would need to make additional assumptions or algorithmic 
modifications to circumvent these negative results: For example, explicitly 
using the fact that the matrix sizes are larger than $1$ -- something which is 
not clear how to do with current analyses -- or having a fine-grained 
dependency on the variance of the random initialization, as further discussed 
in \secref{sec:experiments}. Alternatively, our results might be circumvented 
using other gradient-based algorithms (for 
example, by adding random noise to the gradient updates or using adaptive 
step sizes), or other initialization strategies. However, that would not 
explain why plain gradient descent with standard random initializations is 
often practically effective on these problems. Overall, we believe our results 
point to a potential obstacle in understanding the convergence of 
gradient-based methods for linear networks: At the very least, one would have 
to rule out one-dimensional layers, or consider algorithms other than plain 
gradient descent with standard initializations, in order 
to establish polynomial-time convergence guarantees for deep linear networks. 

Finally, we note that our results provide a possibly interesting contrast to 
the 
recent work of \citet{arora2018optimization}, which suggests 
that increasing depth can sometimes accelerate the optimization process. Here 
we show that at least in some cases, the opposite occurs: Adding depth can 
quickly turn a trivial optimization problem into an intractable one for 
gradient descent.

\section{Preliminaries}

\textbf{Notation.} We use bold-faced letters to denote vectors. 
Given a vector $\bw$, $w_j$ refers to its $j$-th coordinate. 
$\norm{\cdot}$, 
$\norm{\cdot}_1$ and 
$\norm{\cdot}_{\infty}$ refer to the Euclidean norm, the $1$-norm and the 
infinity norm respectively. We let 
$\prod_{i=1}^{k}w_i$ and $\prod_i w_i$ be a 
shorthand for $w_1\cdot w_2\cdots w_k$. Also, we define a product 
over an empty set as being 
equal to $1$. Since our main focus is to 
study the dependence on the 
network 
depth $k$, we use the standard notation 
$\Ocal(\cdot),\Omega(\cdot),\Theta(\cdot)$ to hide constants independent of 
$k$, and $\tilde{\Ocal}(\cdot),\tilde{\Omega}(\cdot),\tilde{\Theta}(\cdot)$ to 
hide constants and factors logarithmic in $k$. 

\textbf{Gradient Descent.} We consider the standard gradient descent method for 
unconstrained optimization 
of functions $F$ in Euclidean space, which given an initialization point 
$\bw(1)$, performs repeated iterations of the form $\bw(t+1):=\bw(t)-\eta 
\nabla F(\bw(t))$ for $t=1,2,\ldots$ (where $\nabla F(\cdot)$ is the gradient, 
and $\eta>0$ is a step size parameter). For objectives as in 
\eqref{eq:linopt1}, 
we 
have $\frac{\partial}{\partial w_j} F(\bw)=
f'(\prod_i w_i)\prod_{j\neq i}w_i$, and gradient descent takes the form
\[
\forall j,~ w_j(t+1) = w_j(t)-\eta f'\left(\prod_i w_i(t)\right)\prod_{j\neq 
i}w_i(t)~.
\]

\textbf{Random Initialization.} One of the most common initialization methods
for neural networks is \emph{Xavier} initialization 
\citep{glorot2010understanding}, which in the setting of \eqref{eq:linopt} 
corresponds to choosing each entry of each $d\times d$ matrix $W_i$ 
independently from a zero-mean distribution with variance $1/d$ (usually 
uniform or Gaussian). This ensures 
that the variance of the network outputs (with respect 
to the initialization) is constant irrespective of the network size. Motivated 
by residual networks, 
\citet{hardt2016identity} and \citet{bartlett2018gradient} consider 
initializing each $W_i$ independently at $I$, possibly with some random 
perturbation. In this paper we denote such an initialization scheme as a 
\emph{near-identity} initialization. Since we focus here on the case $d=1$ as 
in \eqref{eq:linopt1}, 
Xavier initialization corresponds to choosing each $w_i$ independently from 
a zero-mean, unit-variance distribution, and 
near-identity initialization corresponds to choosing each $w_i$ close to $1$.

\section{Exponential Convergence Time for Gradient Descent}\label{sec:main}

For our negative results, we impose the following mild conditions on the 
function $f$ in \eqref{eq:linopt1}:
\begin{assumption}\label{assump:f}
		$f:\reals\rightarrow\reals$ is differentiable, Lipschitz continuous 
		and strictly monotonically 
		increasing on any interval $[-\frac{1}{2},z)$ where $z>0$. Moreover, 
		$\inf_{p\in [-\frac{1}{2},\infty)}f(p)-\inf_{p\in \reals}f(p) > 0$.
\end{assumption}
Here, we assume that $f$ is fixed, and our goal is to study the convergence 
time of gradient descent on \eqref{eq:linopt1} as a function of the depth $k$.
Some simple examples satisfying Assumption \ref{assump:f} in the context of 
machine learning include $f(x) = (x+1)^2$ and $f(x) = \log(1+\exp(x))$ (e.g.,
squared 
loss and logistic loss with respect to the input/output pair $(1,-1)$, 
respectively). We note that this non-symmetry with respect to positive/negative 
values is completely arbitrary, and one can prove similar results if their 
roles are reversed. 

\subsection{Xavier Initialization}

We begin with the case of Xavier initialization, where we initialize all 
coordinates of $\bw$ in \eqref{eq:linopt1} independently from a zero-mean, unit 
variance distribution. We will consider any distribution which satisfies the 
following:

\begin{assumption}\label{assump:initzero}
	$w_1(1),\ldots,w_k(1)$ are drawn i.i.d. from a zero-mean, unit variance 
	distribution such that 
	\begin{enumerate}
		\item $\Pr(w_1(1)\in [-a,a])\leq c_1 a$ for all $a\geq 0$
		\item $\E[|w_1(1)|]\leq 1-c_2$~
	\end{enumerate}
	where $c_1,c_2>0$ are absolute constants independent of $k$.
\end{assumption}

The first part of the assumption is satisfied for any 
distribution with bounded density. As to the second part, the 
following lemma shows that it is satisfied for uniform and Gaussian 
distributions (with an explicit $c_2$), and in fact for any non-trivial 
distribution (with a distribution-dependent $c_2$ -- see also footnote 
\ref{footnote:orthogonal}):
\begin{lemma}
	The following hold:
	\begin{itemize}
		\item If $w$ is drawn from a zero-mean, unit-variance Gaussian, then 
		$\E[|w|]< 0.8$~.
		\item If $w$ is drawn from a zero-mean, unit-variance uniform 
		distribution, then $\E[|w|]< 0.9$~.
		\item If $w$ is drawn from any zero-mean, unit variance distribution 
		other than uniform on $\{-1,+1\}$, then $\E[|w|]< 1$.
	\end{itemize}
\end{lemma}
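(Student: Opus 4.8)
The plan is to treat the three claims separately: the first two reduce to direct computations, while the third follows from a strict Cauchy--Schwarz (equivalently, strict Jensen) inequality together with an analysis of its equality case.

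For the Gaussian claim, I would use that a zero-mean, unit-variance Gaussian has density $x\mapsto \frac{1}{\sqrt{2\pi}}e^{-x^2/2}$, so that $\E[|w|] = 2\int_0^\infty \frac{x}{\sqrt{2\pi}}e^{-x^2/2}\,dx = \sqrt{2/\pi}$, and then verify numerically that $\sqrt{2/\pi} < 0.8$ (indeed $\sqrt{2/\pi}\approx 0.798$). For the uniform claim, I would first observe that a zero-mean uniform distribution on an interval must be of the form $\mathrm{Uniform}([-a,a])$ for some $a>0$ (by symmetry), and that the unit-variance constraint forces $a^2/3 = 1$, i.e.\ $a=\sqrt{3}$. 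Then $\E[|w|] = \frac{1}{2a}\int_{-a}^{a}|x|\,dx = a/2 = \sqrt{3}/2 < 0.9$.

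For the general claim, I would apply Cauchy--Schwarz: since $\E[w]=0$ gives $\E[w^2] = \var(w) = 1$, we have $\E[|w|] = \E[|w|\cdot 1] \le \sqrt{\E[w^2]}\sqrt{\E[1]} = 1$. The key point is that equality holds if and only if $|w|$ is almost surely a constant; since $\E[w^2]=1$, that constant must be $1$, so $w\in\{-1,+1\}$ almost surely, and then $\E[w]=0$ forces $\Pr(w=1) = \Pr(w=-1) = 1/2$, i.e.\ $w$ is uniform on $\{-1,+1\}$. Hence any zero-mean, unit-variance distribution other than the uniform distribution on $\{-1,+1\}$ makes the inequality strict, yielding $\E[|w|] < 1$.

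All the computations are elementary, so there is no serious obstacle. The only points requiring (minor) care are the equality analysis in the general case -- verifying that the two-point distribution on $\{-1,+1\}$ is the unique maximizer of $\E[|w|]$ among zero-mean, unit-variance distributions -- and confirming that the stated numerical constants $\sqrt{2/\pi} < 0.8$ and $\sqrt{3}/2 < 0.9$ hold, which they do with a comfortable margin.
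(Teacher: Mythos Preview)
Your proposal is correct and essentially matches the paper's proof: the paper also treats the first two parts as standard computations (without writing them out explicitly) and handles the third via the strict concavity of $\sqrt{\cdot}$, i.e.\ Jensen's inequality $\E[|w|]=\E[\sqrt{w^2}]<\sqrt{\E[w^2]}=1$, which is the same inequality you obtain via Cauchy--Schwarz. Your equality analysis is also the same, just spelled out in slightly more detail.
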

\begin{proof}
	The first two parts follow from standard results on Gaussian and uniform 
	distributions. As to the third part, it is easy to see that the support of 
	any	distribution which satisfies the conditions cannot be a subset of 
	$\{-1,+1\}$, and therefore $w^2$ is not supported on a single value. By 
	Jensen's inequality and the fact that $\sqrt{\cdot}$ is a strictly concave 
	function, it follows that
	$\E[|w|]=\E[\sqrt{w^2}]<\sqrt{\E[w^2]} = 1$. 
\end{proof}

With such an initialization, we now show that gradient descent is 
overwhelmingly likely to take at least exponential time to converge:

\begin{theorem}\label{thm:zero}
	The following holds for some positive constants $c,c'$ independent of $k$: 
	Under 
	Assumptions 
	\ref{assump:f} and \ref{assump:initzero}, if gradient 
	descent is ran with any step size $\eta\leq \exp(ck)$, then with 
	probability at least 
	$1-\exp(-\Omega(k))$ over the initialization, the number of iterations 
	required to reach suboptimality less than $c'$ is at least 
	$\exp(\Omega(k))$.
\end{theorem}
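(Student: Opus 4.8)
The plan is to reduce the claim to a lower bound on the first iteration at which gradient descent drives the product $p(t):=\prod_{i}w_i(t)$ below $-\tfrac12$, and then to show that $|p(t)|$ can grow only exponentially slowly from its (exponentially small) initial value. For the reduction, observe that by Assumption~\ref{assump:f} the quantity $\gamma:=\inf_{p\geq-1/2}f(p)-\inf_{p\in\reals}f(p)$ is strictly positive, so any $\bw$ with $\prod_i w_i\geq-\tfrac12$ has suboptimality $F(\bw)-\inf F\geq\gamma$. Hence, taking $c':=\gamma/2$, it suffices to show that with probability $1-\exp(-\Omega(k))$ the first iteration $t$ with $p(t)<-\tfrac12$ is at least $\exp(\Omega(k))$.

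Next I would handle the initialization. Writing $\log|p(1)|=\sum_i\log|w_i(1)|$, Jensen's inequality and part~2 of Assumption~\ref{assump:initzero} give $\E[\log|w_1(1)|]\leq\log\E|w_1(1)|\leq\log(1-c_2)=:-\mu<0$, while $\log|w_1(1)|$ has sub-exponential tails (the upper tail from Markov and unit variance, the lower tail from part~1 of Assumption~\ref{assump:initzero}); a Bernstein bound then yields $|p(1)|\leq e^{-c_0 k}$ with probability $1-\exp(-\Omega(k))$, where $c_0:=\mu/2$. A union bound using part~1 of Assumption~\ref{assump:initzero} also gives $\min_i|w_i(1)|\geq e^{-\alpha k}$, hence $S:=\sum_i w_i(1)^{-2}\leq k\,e^{2\alpha k}$, with probability $1-\exp(-\Omega(k))$. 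I then fix the admissible step-size exponent $c:=c_0/2$ and $\alpha:=c_0/16$; the role of these choices is that $\eta\leq e^{ck}$ is small enough relative to both $|p(1)|$ and $S$ that the effective step $\eta\,|\nabla F(\bw(t))|\lesssim\eta L\,|p(t)|\,\max_i|w_i(t)|^{-1}$ stays exponentially small as long as $|p(t)|$ has not yet grown.

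The heart of the proof is an induction on $t$ showing, for all $t$ up to some $T^\star=e^{\Omega(k)}$, that (i) $s_j(t):=w_j(t)^2\geq\tfrac12 w_j(1)^2$ for all $j$ (so $\sum_j s_j(t)^{-1}\leq 2S$), (ii) no coordinate overshoots, i.e.\ $\eta f'(p(t))|p(t)|<\min_j s_j(t)$, and (iii) $|p(t)|\leq 2|p(1)|$. The driving identity comes from tracking $\Phi(t):=\sum_j\log s_j(t)=2\log|p(t)|$: since $s_j(t+1)=(1-\eta f'(p(t))p(t)/s_j(t))^2 s_j(t)$, one has $\Phi(t+1)-\Phi(t)=2\sum_j\log|1-\eta f'(p(t))p(t)/s_j(t)|$. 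When $-\tfrac12\leq p(t)\leq 0$ every factor is $\geq 1$, and using $\log(1+x)\leq x$, $f'\leq L$ and $|p(t)|=e^{\Phi(t)/2}$ the increase is at most $4\eta L S\,e^{\Phi(t)/2}$; when $p(t)>0$ the (no-overshoot) factors lie in $(0,1)$, so $\Phi$ does not increase. Passing to $r_t:=1/|p(t)|=e^{-\Phi(t)/2}$ and using $e^{-x}\geq 1-x$, this gives $r_{t+1}\geq r_t-2\eta L S$ in all cases, hence $r_t\geq r_1-2\eta L S\,t$. Since $r_1\geq e^{c_0 k}$ while $2\eta L S=O(k\,e^{5c_0 k/8})$, choosing $T^\star:=\lfloor r_1/(8\eta L S)\rfloor=e^{\Omega(k)}$ keeps $r_t\geq\tfrac12 r_1$, so $|p(t)|\leq 2|p(1)|\leq 2e^{-c_0 k}<\tfrac12$ throughout; and this same bound on $|p(t)|$ is exactly what closes (i) and (ii). In particular $p(t)>-\tfrac12$ for all $t\leq T^\star$, which is the sought reduction.

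I expect the main obstacle to be this uniformity over step sizes. Because the theorem permits any $\eta\leq e^{ck}$, one cannot rely on a small-step (gradient-flow) approximation, and must rule out the possibility that a single step flings $|p(t)|$ from exponentially small straight past $\tfrac12$, or flips enough coordinate signs to land below $-\tfrac12$. The remedy above is to pick the admissible exponent $c$ strictly below the decay rate of $|p(1)|$ and to carry the no-overshoot property through the induction, so that per step $|p|$ can grow by at most a factor $1+O(\eta L S\,|p(t)|)$ until $e^{\Omega(k)}$ iterations have elapsed. A secondary technical point is the concentration of $\sum_i\log|w_i(1)|$, which relies precisely on the sub-exponential tail bounds for $\log|w_1(1)|$ furnished by the two parts of Assumption~\ref{assump:initzero}.
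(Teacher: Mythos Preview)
Your approach is correct and takes a genuinely different route from the paper's. The paper's proof is geometric and static: it first shows (via a plain Markov bound and union bound) that $\max_j\left|\prod_{i\neq j}w_i(1)\right|$ is exponentially small with high probability, and then proves a ``flat-ball'' lemma (\lemref{lem:flatball}) establishing that for \emph{every} point $\bv$ within a certain radius of $\bw(1)$ one still has $\max_j\left|\prod_{i\neq j}v_i\right|\leq\beta$, hence $\|\nabla F(\bv)\|$ is exponentially small throughout that ball. The iteration lower bound then follows simply because each gradient step moves by at most $\eta\|\nabla F\|$, and escaping a ball of non-trivial radius therefore takes exponentially many steps. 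Your proof, by contrast, is dynamic: you never bound the gradient in a region, but instead track $|p(t)|$ along the actual trajectory, exploiting the multiplicative identity $s_j(t+1)=s_j(t)\bigl(1-\eta f'(p(t))\,p(t)/s_j(t)\bigr)^2$ together with $\Phi(t)=\sum_j\log s_j(t)=2\log|p(t)|$ to obtain the linear recursion $r_{t+1}\geq r_t-2\eta L S$ for $r_t=1/|p(t)|$.

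Your route is arguably more direct here, since it exploits the exact product structure of the update and sidesteps the separate geometric lemma entirely; it also makes transparent why the sign of $p(t)$ cannot flip under the no-overshoot invariant (each coordinate is multiplied by a positive factor), so the $p(1)>0$ case is in fact trivial. The paper's route has the complementary advantage that the flat-ball lemma is oblivious to the specific iterate dynamics: once the gradient is uniformly small in a region, the bound applies to \emph{any} method that moves at most $\eta\|\nabla F\|$ per step, which may ease extensions to perturbed or inexact gradient updates. A further minor difference is the concentration step: you use a Bernstein bound on the sub-exponential sum $\sum_i\log|w_i(1)|$ (relying on both parts of Assumption~\ref{assump:initzero}), whereas the paper gets by with Markov's inequality applied directly to $\left|\prod_{i\neq j}w_i(1)\right|$ and a union bound over $j$, using only part~2 of the assumption for this step.
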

In the above, $\Omega(\cdot)$ hides dependencies on the absolute constants in 
the theorem statement and the assumptions. The proof (as well as
all other major proofs in this paper) is presented in \secref{sec:proofs}. 

The intuition behind the theorem is quite simple: Under our assumptions, it is 
easy to show that the product of any $\Omega(k)$ coordinates from 
$w_1(1),\ldots,w_k(1)$ is overwhelmingly likely to be exponentially small in 
$k$. Since the derivative 
of our objective w.r.t. any $w_j$ has the form
$f'(\prod_i w_i)\prod_{i\neq j} w_i$,
it follows that the gradient is exponentially small in $k$. Moreover, we show 
that the gradient is exponentially small at any point within a bounded distance 
from the initialization (which is the main technical challenge of the proof, 
since the gradient is by no means Lipschitz). As a 
result, gradient descent will only make exponentially small steps. Assuming we 
start from a point bounded away from a global minimum, it follows that the 
number of required iterations must be exponentially large in $k$. 

We note that the observation that Xavier initialization leads to highly skewed 
values in deep enough networks is not new (see 
\citet{saxe2013exact,pennington2017resurrecting}), 
and has motivated alternative initializations such as orthogonal 
initialization\footnote{It is interesting to note that in our setting, 
orthogonal initialization amounts to choosing each $w_i$ in $\{-1,+1\}$, which 
can easily cause non-convergence, e.g. for $F(w_1,\ldots,w_k) = 
(\prod_i w_i-y)^2$ when $y\prod_i w_i(1)<0$ and small enough step 
sizes.\label{footnote:orthogonal}} Our 
contribution here is to rigorously analyze how this affects the optimization 
process for our setting.

\subsection{Near-Identity Initialization}

We now turn to consider initializations where each $w_i$ is initialized close 
to $1$. Here, it will be convenient to make deterministic rather than 
stochastic assumptions on the initialization point (which are satisfied with 
high probability for reasonable distributions):
\begin{assumption}\label{assump:initone}
	For some absolute constants $c_1,c_2,c_3>0$ independent of 
	$k$, gradient descent is initialized at a point $\bw(1)$ which satisfies 
	$\max_{j} |w_j(1)-1|~\leq~ k^{-c_1}$ and $c_2\leq\prod_i w_i(1)\leq c_3$.
\end{assumption}

To justify this assumption, note that if $w_1(1),\ldots, w_k(1)$ are chosen 
i.i.d. and not in the range of $1\pm k^{-c_1}$ for some $c_1>0$, then their 
product is likely to explode or vanish with $k$. 

\begin{theorem}\label{thm:identity}
	The following holds for some positive constants $c,c'$ independent of $k$:	
	Under 
	Assumptions \ref{assump:f} and \ref{assump:initone}, if gradient 
	descent is ran with any positive step size $\eta\leq c$, then the 
	number of 
	iterations required to reach suboptimality less than $c'$ is at least 
	$\exp(\Omega(k))$.
\end{theorem}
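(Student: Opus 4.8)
The plan is to show that when $\eta\le c$ for a small enough constant $c$, the product $p(t):=\prod_{i=1}^k w_i(t)$ stays strictly positive for $\exp(\Omega(k))$ iterations, which by Assumption~\ref{assump:f} keeps the objective bounded away from its infimum. First I would reduce the theorem to this claim. Let $\Delta:=\inf_{p\ge-1/2}f(p)-\inf_{p\in\reals}f(p)>0$ and set $c':=\Delta$. Since the range of $\prod_i w_i$ over $\bw\in\reals^k$ is all of $\reals$, we have $\inf_{\bw}F(\bw)=\inf_{p\in\reals}f(p)$, and any iterate with $p(t)\ge-1/2$ satisfies $F(\bw(t))=f(p(t))\ge\inf_{p\ge-1/2}f(p)=\inf_{\bw}F(\bw)+\Delta$. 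So it suffices to show $p(t)>0$ (hence $>-1/2$) for all $t$ up to some $\exp(\Omega(k))$.

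Next I would record the scalar recursion for $p(t)$. Writing the update as $w_i(t+1)=w_i(t)-\eta f'(p(t))\,p(t)/w_i(t)$ (valid whenever $w_i(t)\neq0$) and multiplying over $i$,
\[
p(t+1)\;=\;p(t)\prod_{i=1}^k\left(1-\frac{\eta\,f'(p(t))\,p(t)}{w_i(t)^2}\right).
\]
The argument is then an induction on $t$, up to $T_0:=\exp(ck)$, with hypothesis: (i) $w_i(s)^2\in[\tfrac12,\tfrac32]$ for all $i$ and all $s\le t$, and (ii) $p(s)\in(0,c_3]$ for all $s\le t$. Given (i) at step $s$, each factor above is $\le1$ (because $f'\ge0$ on $[-1/2,\infty)$ by Assumption~\ref{assump:f} and $p(s)>0$) and $>0$ (because $\eta f'(p(s))p(s)\le\eta L c_3<\tfrac12\le w_i(s)^2$ for small $c$, where $L$ is the Lipschitz constant of $f$). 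Hence $p(s+1)\in(0,p(s)]$, giving (ii) and showing $p$ is non-increasing and the $w_i$ never vanish. Squaring the update then yields
\[
w_i(t+1)^2\;=\;w_i(1)^2\;-\;2\eta\sum_{s\le t}f'(p(s))p(s)\;+\;\eta^2\sum_{s\le t}\frac{f'(p(s))^2p(s)^2}{w_i(s)^2},
\]
and since $w_i(1)^2=1\pm O(k^{-c_1})$ by Assumption~\ref{assump:initone}, closing (i) amounts to showing both sums are small.

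The $\eta^2$-sum is easy: dropping all but the $i$-th factor in the recursion gives $p(s)-p(s+1)\ge\eta f'(p(s))p(s)^2/w_i(s)^2$, so $\sum_s\eta f'(p(s))p(s)^2\le\tfrac32(p(1)-p(t+1))\le\tfrac32 c_3$, whence $\eta^2\sum_s f'(p(s))^2p(s)^2/w_i(s)^2\le2\eta L\sum_s\eta f'(p(s))p(s)^2=O(\eta)$. The hard part — the place where the depth $k$ is decisive — is the linear ``drift'' term $\sum_s\eta f'(p(s))p(s)$, since a naive per-step bound would only control it for $O(1/\eta)$ iterations. Here one must exploit that $p$ is multiplied by a product of $k$ near-one factors: from $1-x\le e^{-x}$ and $w_i(s)^2\le\tfrac32$,
\[
\log\frac{p(s)}{p(s+1)}\;=\;-\sum_{i=1}^k\log\!\left(1-\frac{\eta f'(p(s))p(s)}{w_i(s)^2}\right)\;\ge\;\eta f'(p(s))p(s)\sum_{i=1}^k\frac{1}{w_i(s)^2}\;\ge\;\frac{2k}{3}\,\eta f'(p(s))p(s),
\]
so $\eta f'(p(s))p(s)\le\tfrac{3}{2k}\log\tfrac{p(s)}{p(s+1)}$ and, telescoping, $\sum_{s\le t}\eta f'(p(s))p(s)\le\tfrac{3}{2k}\log\tfrac{p(1)}{p(t+1)}$. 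It then remains to lower-bound $p(t+1)$: the product can shrink sharply only in the first few steps, which contribute just $O(\tfrac{\log k}{k})$ to the drift; once $p(s)$ is small, $p(s+1)\ge p(s)(1-2\eta L p(s))^k\ge p(s)-2k\eta L p(s)^2$, so $1/p(s+1)\le1/p(s)+O(k\eta L)$ and after $T_0=\exp(ck)$ steps $p(T_0)\ge\exp(-O(ck))$ (for $c$ small enough relative to $L,c_3$, and taking the exponent in $T_0$ smaller than $\log\frac{1}{1-2\eta Lc_3}$; a smaller $\eta$ only slows $p$ down further, so the bound is uniform in $\eta\le c$). Substituting, $\sum_{s\le T_0}\eta f'(p(s))p(s)=O(c)+o(1)$, below any fixed constant for small $c$ and large $k$; together with the $\eta^2$-bound this keeps $w_i(t)^2\in[\tfrac12,\tfrac32]$, closing the induction. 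Hence $p(t)>0>-1/2$ for all $t\le\exp(ck)$, so the suboptimality stays at least $\Delta=c'$, proving the theorem. I expect the lower bound on $p(t)$ — equivalently, that the accumulated drift stays $o(1)$ over exponentially many steps — to be the main obstacle, precisely because the $1/k$ factor gained from the product of $k$ near-one terms is exactly what converts a constant step budget into an $\exp(\Omega(k))$ one.
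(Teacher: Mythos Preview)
Your approach is correct and takes a genuinely different route from the paper's. The paper tracks the individual coordinates directly via an epoch-based analysis: it shows (\lemref{lem:decay}) that once $\prod_i w_i(t)\le\beta$, each coordinate moves by at most $O(\eta\beta)$ per step and the product shrinks by a factor $e$ within $O(\beta^{1/k-1}/(\eta k))$ steps; summing over geometrically decreasing epochs (\lemref{lem:closealways}) keeps $\max_{j,j'}|w_j(t)-w_{j'}(t)|$ small, so that when some coordinate first nears $1/2$ all coordinates must lie in $[2/3,3/4]$, the product is $(3/4)^k$, and covering the remaining $\Omega(1)$ distance costs $\exp(\Omega(k))$ steps. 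You instead work with $w_i^2$ and extract the $1/k$ gain in a single stroke via the telescoping inequality $\eta f'(p(s))p(s)\le\frac{3}{2k}\log\frac{p(s)}{p(s+1)}$, reducing everything to a lower bound on $p(T_0)$ obtained from the $1/p$-recursion. This is more elementary and avoids the epoch bookkeeping; the paper's route, on the other hand, yields the stronger structural statement that the coordinates remain mutually close (not merely individually bounded), which you do not need here.

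One small point: your parenthetical about ``taking the exponent in $T_0$ smaller than $\log\frac{1}{1-2\eta Lc_3}$'' is garbled. What your analysis actually requires, and delivers, is this: for $s$ with $p(s)>1/(4k\eta L)$ the contribution to $\log(p(1)/p(\cdot))$ is at most $\log(4c_3k\eta L)=O(\log k)$; the single transition step from $p(s_0-1)>1/(4k\eta L)$ to $p(s_0)$ contributes at most $4k\eta L c_3$ (via $-\log(1-x)\le 2x$); and the phase-2 recursion $1/p(s+1)\le 1/p(s)+O(k\eta L)$ then gives $\log(1/p(T_0))\le O(\log k)+4k\eta Lc_3+ck$. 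Dividing by $k$, the drift is $O(\log k/k)+12\eta Lc_3+3c\le O(c)+o(1)$ since $\eta\le c$, which is exactly your stated conclusion --- just with the bookkeeping made explicit.
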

As before, $\Omega(\cdot)$ hides dependencies on the absolute constants in 
the theorem statement, as well as those in the assumptions.

The formal proof appears in \secref{sec:proofs}. To help explain its intuition, 
we provide in Figure 
\ref{fig:evolution} the actual evolution of $w_j(t)$ for a typical run of 
gradient descent, when $F(\bw)=F(w_1,\ldots,w_7) = 
\frac{1}{2}(\prod_{i=1}^{7} w_i+1)^2$ and we initialize all coordinates 
reasonably close to $1$. 
Recall that for any $w_j(t)$, the gradient descent updates take the form
\[
\forall j,~ w_j(t+1)~=~ w_j(t)-\eta\left(\prod_i w_i(t)+1\right)\prod_{i\neq 
j} w_i(t)~,
\]
where $\prod_i w_i(1)>0$. 
Thus, initially, all parameters $w_j(t)$ decrease with $t$, as to be expected. 
However, as their value fall to around or below $1$, their product decreases 
rapidly to 
$\exp(-\Omega(k))$. Since the gradient of each $w_j(t)$ scales as 
$\prod_{i\neq j} w_i(t)$, 
the magnitude of the gradients becomes very small, and the algorithm makes only 
slow progress.  Eventually, one of the parameters becomes negative, in which 
case all other parameters start increasing, and the algorithm 
converges. However, by a careful analysis, the length of the slow middle phase 
can be shown to be exponential in the depth / number of parameters $k$. 

\begin{figure}
\centering
\includegraphics[trim=2cm 0cm 0cm 0cm, clip=true, scale=1]{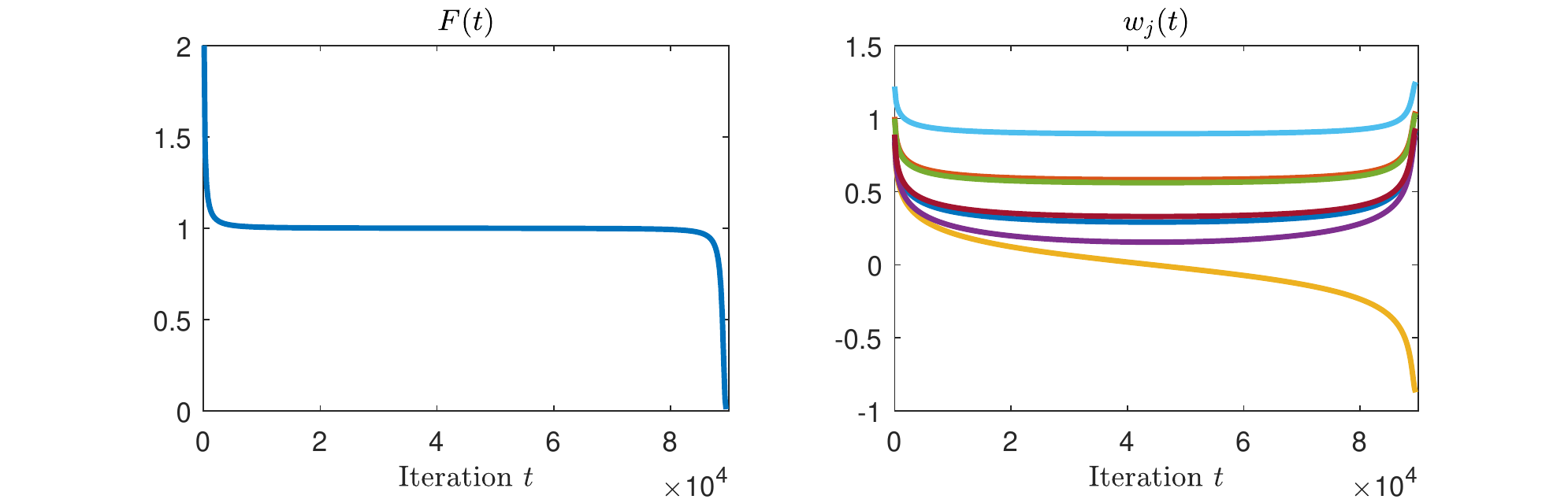}
\caption{The left figure plots $F(\bw(t))$ as a function of iteration $t$, for 
$F(\bw)=(\prod_{i=1}^{7} w_i+1)^2$. The right figure plots 
$w_1(t),w_2(t),\ldots,w_7(t)$ as a function of $t$. Best viewed in color.}
\label{fig:evolution}
\end{figure}

\subsection{A Positive Result}

Having established that the number of iterations is at least $\exp(\Omega(k))$, 
we now show that this is nearly tight. Specifically, we prove that gradient 
descent indeed converges in the settings studied so far, with a number of 
iterations scaling as $\exp(\tilde{\Ocal}(k))$ (this 
can be interpreted as a constant for any constant $k$). For simplicity, we 
prove this in the case where $f(\prod_i w_i) = \frac{1}{2}(\prod_i w_i-y)^2$, 
but the technique can be easily generalized to other convex $f$ under mild 
conditions. We note that the case of $y>0$ and each $w_i$ initialized to $1$ is 
covered by the results in \citet{bartlett2018gradient}. However, here we show a 
convergence result for other values of $y$, and even if $w_i$ are not all 
initialized at $1$. 

We will use the following assumptions on our objective and parameters:
\begin{assumption}\label{assump:pos}
The following hold for some absolute positive constants $c_1,c_2,c_3,c_4$
independent of $k$:
\begin{itemize}
\item $y=-c_1<0$
\item The initialization $w_1(1),\ldots,w_k(1)$ satisfies the following:
\begin{itemize}
	\item $|w_i(1)|\leq c_2$ and $\prod_i w_i(1)>y$
	\item $\min_{j\neq j'} \left||w_j(1)|-|w_{j'}(1)|\right|\geq k^{-c_4}$
	\item $\max_{j,j'} \left|\prod_{i\notin \{j,j'\}}w_i(1)\right|\leq c_4$
\end{itemize}
\end{itemize}
\end{assumption}

The assumptions $y<0$ and $\prod_i w_i(1)>y$ ensure that the objective 
satisfies the conditions of our negative results, for both 
Xavier and near-identity initializations (the other cases can be studied using 
similar techniques). 

\begin{theorem}\label{thm:posid}
	Consider the objective $F(\bw) = \frac{1}{2}\left(\prod_i w_i-y\right)^2$.
	Under Assumption \ref{assump:pos}, for any step size $\eta=k^{-c}$ for some 
	large enough constant $c>0$, and for any $\epsilon>0$, the number of 
	gradient descent iterations $t$ required for $F(\bw_t)\leq \epsilon$ is at 
	most $\exp\left(\tilde{\Ocal}(k)\right)\cdot\max\{1,\log(1/\epsilon)\}$. 
\end{theorem}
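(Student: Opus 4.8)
The plan is to track the dynamics of gradient descent on $F(\bw)=\frac12(\prod_i w_i - y)^2$ with $y=-c_1<0$ through three phases, exactly mirroring the three-phase intuition described for the negative results, and to bound the number of iterations in each phase. Write $p(t):=\prod_i w_i(t)$ and $g(t):=p(t)-y = p(t)+c_1$; the update is $w_j(t+1) = w_j(t) - \eta\, g(t)\prod_{i\neq j} w_i(t)$, and a key preliminary computation I would do is to expand $p(t+1) = \prod_j(w_j(t)-\eta g(t)\prod_{i\neq j}w_i(t))$ to first order in $\eta$, obtaining $p(t+1) \approx p(t) - \eta g(t)\sum_j \prod_{i\neq j}w_i(t)^2 \cdot (\text{stuff})$, and more usefully to control the ``energy'' quantities $\sum_j \prod_{i\neq j}w_i(t)$ and products over all-but-two coordinates. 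Before anything, I would establish invariants that hold throughout the run: that $\max_i |w_i(t)|$ stays bounded by roughly $c_2$ (using that the gradient points inward once coordinates are moderately large, so the iterates cannot blow up for small enough $\eta$), and that the gap ordering $|w_j(t)|$ vs $|w_{j'}(t)|$ and the lower bound $\min_{j\neq j'}||w_j(t)|-|w_{j'}(t)||\geq k^{-c_4}$ are approximately preserved (the coordinates move by nearly the same multiplicative amount when their product is tiny, so differences shrink only slowly) — this separation assumption is what prevents two coordinates from simultaneously hitting zero and is essential for the sign flip to happen cleanly.

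Phase 1 (descent to the plateau): starting from $\prod_i w_i(1)\in(y,\ldots)$ with all $|w_i(1)|\leq c_2$, all coordinates decrease (since $g(1)>0$ and the partial products are positive if we're in the near-identity regime, or more care is needed in the Xavier regime), and within $\exp(\tilde\Ocal(k))$ — in fact I expect $\mathrm{poly}(k)/\eta$ here, which is $\exp(\tilde\Ocal(k))$ for $\eta=k^{-c}$ — steps we reach the regime where $|p(t)|\leq$ (something small), i.e. $g(t)\in(0, c_1)$. Phase 2 (the plateau): here $|p(t)|$ is exponentially small in $k$ because a constant fraction of coordinates have $|w_i|$ bounded away from $1$, so the per-step change of each $w_j$ is $\eta g(t)\prod_{i\neq j}w_i(t)$, which is $\eta\cdot\Theta(1)\cdot\exp(-\Omega(k))$; the coordinate closest to $0$ (call it $w_{j^*}$, which by the separation assumption is unique and well-defined) decreases monotonically at a rate that, while exponentially slow, is bounded below, so it crosses $0$ after at most $\exp(\tilde\Ocal(k))$ steps. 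The crucial sub-lemma is a lower bound on $|\prod_{i\neq j^*}w_i(t)|$ during this phase — I would argue it stays $\geq \exp(-\tilde\Ocal(k))$ because the other coordinates cannot all collapse toward $0$ in the time available (their own motion is controlled by the same exponentially small gradients, and the separation from $w_{j^*}$ is maintained). Phase 3 (fast convergence): once $w_{j^*}(t)<0$, we have $p(t)<0$, so $g(t)=p(t)+c_1$; now the sign structure is such that $w_{j^*}$ keeps moving away from $0$ (more negative) while the positive coordinates also grow, $|p(t)|$ increases, and the dynamics enter a region where $F$ satisfies a Polyak–Łojasiewicz-type inequality or where one can directly show geometric decrease of $g(t)^2$; this contributes the $\max\{1,\log(1/\epsilon)\}$ factor, and I would make this precise by showing $\sum_j \prod_{i\neq j}w_i(t)^2$ becomes and stays $\geq$ a positive constant (equivalently, the gradient norm squared is $\geq c\cdot F(\bw(t))$), after which the standard descent-lemma argument gives $F(\bw(t+1))\leq(1-\eta c')F(\bw(t))$, using the local smoothness bound obtained from the boundedness invariant on the iterates.

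The main obstacle, as the paper itself flags, is that none of the relevant quantities are globally Lipschitz or smooth, so every phase transition and every invariant has to be established by hand with quantitative control over how the partial products $\prod_{i\neq j}w_i(t)$ and $\prod_{i\notin\{j,j'\}}w_i(t)$ evolve — in particular the coupled evolution where a tiny $w_{j^*}$ sits in a sea of $\Theta(1)$ coordinates, and one must rule out bad events like another coordinate racing to $0$, a product sign unexpectedly flipping, or an iterate overshooting past $0$ by a large amount when $\eta$ times an exponentially small gradient is somehow not small (it is, but one must verify the step size is small relative to the relevant local smoothness at each phase). The separation assumption $\min_{j\neq j'}||w_j(1)|-|w_{j'}(1)||\geq k^{-c_4}$ and the all-but-two-product bound are precisely the technical handles that make this tractable: the first guarantees a single well-defined ``trigger'' coordinate, and the second controls second-order terms in the expansion of $p(t+1)$ and in the cross-effects between coordinates. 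I expect the bookkeeping to show that each phase costs at most $\exp(\tilde\Ocal(k))$ (Phase 1 and 2 from the exponentially small gradients, Phase 2 additionally from the exponentially small per-step progress of $w_{j^*}$), and Phase 3 costs $\exp(\tilde\Ocal(k))\cdot\log(1/\epsilon)$, giving the claimed bound. The genuinely delicate estimate is the lower bound on the trigger coordinate's speed during the plateau together with the upper bound on the plateau's length — these must be compatible, i.e. the plateau is long but not \emph{too} long, and getting the exponents in $\tilde\Ocal(k)$ to line up (rather than, say, a spurious $\exp(k\log k)$ or worse) is where the careful choice $\eta=k^{-c}$ with $c$ large enough will be used.
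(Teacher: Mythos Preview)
Your three-phase plan matches the paper's proof structure closely: track the iterates until a single distinguished coordinate crosses zero, then invoke a Polyak--\L{}ojasiewicz inequality for the final geometric convergence. Two points where the paper does things a bit differently are worth knowing. First, the paper proves a clean sign-symmetry lemma: if you flip the signs of any subset of the $w_i(1)$ and simultaneously flip the sign of $y$ according to the product of those sign flips, the resulting gradient-descent trajectory is the coordinatewise sign-flip of the original, with identical objective values. This lets one assume without loss of generality that all $w_i(1)>0$, disposing of the ``more care needed in the Xavier regime'' issue you flag; and after the trigger coordinate $w_{j^*}$ becomes negative, the same lemma reduces Phase~3 to the benign case $y>\prod_i w_i>0$ with all coordinates positive, where the PL analysis is cleanest. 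Second, your claim that coordinate differences ``shrink only slowly'' has the direction backwards: under $y<0$ and all $w_i(t)\geq 0$, the paper shows the gaps $w_{j'}(t)-w_j(t)$ are \emph{monotonically increasing} (a one-line computation from the update rule), which is what guarantees $j^*$ stays the unique minimizer and the other coordinates stay bounded below by $k^{-\Ocal(1)}$ throughout Phases~1--2. With these two adjustments your outline is essentially the paper's proof.
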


\section{Multi-Dimensional Networks}\label{sec:experiments}

So far, we showed that for one-dimensional linear neural networks, 
gradient descent can easily require exponentially many iterations (in the depth 
of the network) to converge. However, these results are specific to the case 
where the parameter matrix $W_i$ of each layer is one-dimensional, and do not 
necessarily extend to higher dimensions. A possibly interesting exception is 
when $F(W_1,\ldots,W_k)=\norm{\prod_i W_i - Y}_F^2$, and both $Y$ and the 
initialization $W_1(1),\ldots,W_k(1)$ are diagonal matrices. In that case, it 
is easy to show that the matrices produced by gradient descent remain diagonal, 
and the objective can be rewritten as a sum of independent one-dimensional 
problems for which our results would apply. However, this reasoning fails for 
non-diagonal initializations and target matrices $Y$. 

In this section, we study experimentally whether our theoretical results for 
one-dimensional networks might also extend to multi-dimensional ones. In 
particular, we consider the multi-dimensional generalization of the 
objective function studied earlier:
\[
F(W_1,\ldots,W_k) = \frac{1}{2}\left\|\prod_{i=1}^{k}W_i-Y\right\|_{F}^2,
\]
where $W_1,\ldots,W_k$ are $d\times d$ square matrices (for $d=25$), $Y=-I$ 
($I$ being 
the identity matrix), and $\|\cdot\|_{F}$ is the Frobenius norm. We 
ran gradient descent on this objective using three initialization strategies:
\begin{enumerate}
\item \emph{Xavier initialization}: Each entry of each matrix $W_i$ was 
initialized independently from a zero-mean Gaussian with variance
$\frac{1}{d}$.
\item \emph{Near-Identity initialization}: Each $W_i$ was initialized as $I+M$ 
where each entry of $M$ was sampled independently from a zero-mean Gaussian 
with variance $\frac{1}{dk}$. Up to numerical constants, this is the 
largest variance which ensures that 
$\E[(\prod_{i=1}^{k}W_i)(\prod_{i=1}^{k}W_i)^\top]$ 
remains bounded independent of $d,k$. To see this, note that had we used 
variance $\frac{c}{dk}$ for some constant $c$, then
$\E[W_iW_i^\top] = \left(1+\frac{c}{k}\right)I$ and thus 
$\E[(\prod_{i=1}^{k}W_i)(\prod_{i=1}^{k}W_i)^\top]
= \left(1+\frac{c}{k}\right)^k I\approx \exp(c)I$.
\item \emph{Near-Identity initialization with smaller variance}: Each $W_i$ was 
initialized as above, except that the variance of each entry in the matrix $M$ 
was $\frac{1}{(dk)^2}$. 
\end{enumerate}

\begin{figure}
\includegraphics[scale=0.7,trim=3.4cm 0cm 2cm 
0cm,clip=true]{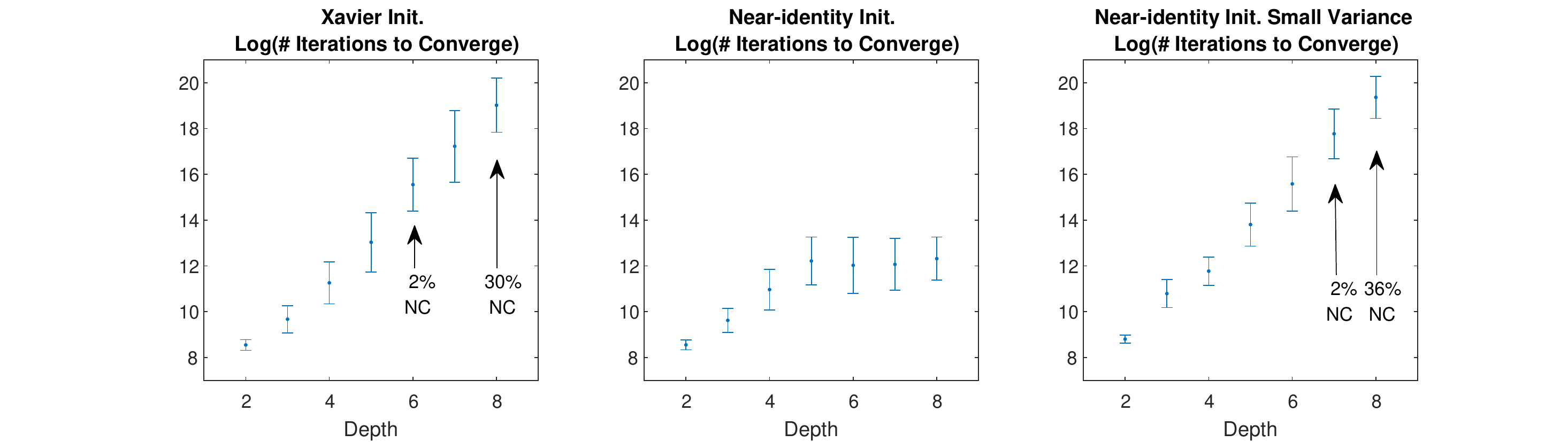}\label{fig:graphs}
\caption{Mean and Standard Deviation of the log number of iterations required 
for convergence, over $50$ trials, for each initialization strategy and depth 
parameter $k\in \{2,3,\ldots,8\}$. `NC' refers to the percentage of runs (for a 
given initialization and depth) which did not converge after $10^9$ iterations, 
if any. Note that when some trials did not converge, the bars actually 
under-estimate the mean convergence time if all trials were ran till 
convergence (since they only represent runs which took a log number of 
iterations less than $\log(10^9)=20.72...$). }
\end{figure}

For each random initialization strategy, and for depth parameter $k\in 
\{2,3,\ldots,8\}$ , we ran 50 trials of gradient 
descent, with a step size\footnote{Our results did not seem to change 
significantly by taking other bounded step sizes.} of $10^{-3}$, until either 
one 
of the following two 
stopping conditions occured:
\begin{itemize}
\item The objective value dropped 
below $0.1$ (or equivalently, $\norm{\prod_{i=1}^{k}W_i-Y}_F\leq 
\sqrt{1/5}$, a 
rather mild requirement).
\item The number of iterations exceeded $10^9$ iterations, in which case the 
algorithm was deemed to have failed to converge (note that from a practical 
viewpoint, one billion 
iterations is exceedingly large considering our problem size).
\end{itemize}
In Figure \ref{fig:graphs}, we plot the mean and standard deviation for the 
\emph{logarithm} of the number of iterations required to make the objective 
value less than $0.1$ (among the $50$ trials which converged). We also point 
out the percentage of trials which did not converge, if any. 

The figure strongly suggests that using both Xavier initialization and 
near-identity initialization with small variance, the required runtime scales 
exponentially with the depth (recall that the $y$-axis is in log scale). This 
indicates that the phenomenon of exponential scaling with depth is not just an 
artifact of one-dimensional networks, and can also occur in multi-dimensional 
networks, even with reasonable random initializations. On the flip side, when 
performing near-identity initialization with a large enough variance, we did 
not observe such an exponential scaling (as evidenced in the middle plot in the 
figure). Moreover, based on some additional experiments with other objective 
functions, it appears that although gradient descent can sometime require 
exponential time to converge, this phenomenon is not particularly common. A 
possible explanation to this is that in one dimension, $\prod_i w_i$ had to 
change sign, and hence pass through zero (see Figure \ref{fig:evolution}). This 
brought the iterates to a ``flat'' region with exponentially small gradients. 
In contrast, in multiple 
dimensions, to continuously change $\prod_i W_i$ from a matrix to some other 
matrix, it is always possible to go ``around'' any particular point. Our 
experiments suggest that gradient descent indeed avoids problematic flat 
regions in many cases, but not always. Overall, it seems quite possible that 
for multi-dimensional networks, the exponential runtime dependence on the 
depth can be avoided under reasonable assumptions -- however, some such 
assumptions would be necessary, and would need to exclude either objectives of 
the type we studied here, or some of the initializations. For example, such an 
analysis might need to explicitly separate between one-dimensional and 
multi-dimensional networks, or between near-identity initialization with 
variance $1/dk$ and with variance $1/(dk)^2$ (which are both polynomially large 
in $d,k$), and how to do so with existing analyses is currently unclear.

\section{Proofs}\label{sec:proofs}

\subsection{Proof of \thmref{thm:zero}}

The proof is based on the following two lemmas:

\begin{lemma}\label{lem:smallinit}
	Suppose $w_1,\ldots,w_k$ are drawn i.i.d. from a distribution such that 
	$\E[|w_1|]\leq a$ for some $a>0$. Then
	\[
	\Pr\left(\max_j \left|\prod_{i\neq j}w_i\right|\geq 
	k a^{(k-1)/2}\right)~\leq~
	a^{(k-1)/2}
	\]
\end{lemma}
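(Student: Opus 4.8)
The plan is to prove this via a one-line second-moment-free argument: Markov's inequality applied coordinate-wise, followed by a union bound over $j$.

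First I would fix an index $j\in\{1,\ldots,k\}$ and consider the single random variable $\prod_{i\neq j}w_i$. Since $w_1,\ldots,w_k$ are i.i.d., the variables $\{|w_i|\}_{i\neq j}$ are independent and nonnegative, so the expectation of their product factorizes:
\[
\E\left[\left|\prod_{i\neq j}w_i\right|\right] ~=~ \prod_{i\neq j}\E[|w_i|] ~\leq~ a^{k-1}.
\]
Applying Markov's inequality to the nonnegative random variable $|\prod_{i\neq j}w_i|$ then gives, for any threshold $t>0$,
\[
\Pr\left(\left|\prod_{i\neq j}w_i\right| \geq t\right) ~\leq~ \frac{a^{k-1}}{t}.
\]

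Next I would take a union bound over the $k$ choices of $j$:
\[
\Pr\left(\max_j\left|\prod_{i\neq j}w_i\right| \geq t\right) ~\leq~ \sum_{j=1}^{k}\Pr\left(\left|\prod_{i\neq j}w_i\right|\geq t\right) ~\leq~ \frac{k\,a^{k-1}}{t}.
\]
Finally, substituting $t = k\,a^{(k-1)/2}$ makes the right-hand side equal to $k a^{k-1}/(k a^{(k-1)/2}) = a^{(k-1)/2}$, which is exactly the claimed bound. (When $a\geq 1$ the statement is vacuously true since probabilities are at most $1$, so one may freely assume $a<1$ if desired, though the computation above does not require it.)

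I do not expect any real obstacle here — the only points requiring a word of care are that the factorization of the expectation uses independence of the nonnegative variables $|w_i|$ (not just of the $w_i$ themselves, though these are equivalent), and that the union bound is over exactly $k$ events. Everything else is a direct substitution.
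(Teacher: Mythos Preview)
Your proof is correct and essentially identical to the paper's: both fix $j$, factorize $\E\bigl[|\prod_{i\neq j}w_i|\bigr]\leq a^{k-1}$ by independence, apply Markov's inequality, and then take a union bound over the $k$ indices. The only cosmetic difference is that the paper plugs in the threshold $t=ka^{(k-1)/2}$ directly rather than carrying a generic $t$ and substituting at the end.
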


\begin{proof}
	For any fixed $j$, by Markov's inequality and the i.i.d. assumption,
	\[
	\Pr\left(\left|\prod_{i\neq j}w_i\right|\geq ka^{(k-1)/2}\right)
	~\leq~
	\frac{\E\left[|\prod_{i\neq j}w_i|\right]}{ka^{(k-1)/2}}
	~=~ \frac{\left(\E[|w_1|]\right)^{k-1}}{ka^{(k-1)/2}}
	~\leq~ \frac{a^{k-1}}{ka^{(k-1)/2}} ~=~ \frac{1}{k}a^{(k-1)/2}~.
	\]
	Taking a union bound over all $j=1,2,\ldots,k$, the result follows.
\end{proof}

\begin{lemma}\label{lem:flatball}
	Let $\alpha,\beta,\delta>0$ be fixed. Let $\bw\in \reals^k$ such that 
	$\max_j \left|\prod_{i\neq j} 
	w_i\right|\leq 
	\alpha$ and $\min_i |w_i|\geq \delta$. Then for any $\bv$ such that 
	$\norm{\bv-\bw}\leq 
	\frac{\delta}{\sqrt{k-1}} 
	\log(\beta/\alpha)$, it holds that $|\prod_i v_i|\leq 
	\beta\norm{\bv}_{\infty}$ as well as
	$\norm{\nabla F(\bv)}~\leq~ 
	\sup_{p:|p|\leq \beta\norm{\bv}_{\infty}}|f'(p)|\cdot \sqrt{k}\beta$.
\end{lemma}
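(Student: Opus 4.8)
The plan is to reduce everything to the single intermediate claim
\[
\max_{j}\Bigl|\prod_{i\neq j}v_i\Bigr|\;\le\;\beta\qquad\text{for every }\bv\text{ with }\norm{\bv-\bw}\le\tfrac{\delta}{\sqrt{k-1}}\log(\beta/\alpha).
\]
Both displayed conclusions of the lemma then follow at once. Indeed, picking a coordinate $j^\star$ of maximal absolute value gives $\bigl|\prod_i v_i\bigr|=|v_{j^\star}|\cdot\bigl|\prod_{i\neq j^\star}v_i\bigr|\le\norm{\bv}_\infty\cdot\beta$; and since $\tfrac{\partial}{\partial v_j}F(\bv)=f'(\prod_i v_i)\prod_{i\neq j}v_i$, the intermediate claim yields $\norm{\nabla F(\bv)}^2=f'(\prod_i v_i)^2\sum_{j}\bigl(\prod_{i\neq j}v_i\bigr)^2\le f'(\prod_i v_i)^2\cdot k\beta^2$, and the already-established bound $\bigl|\prod_i v_i\bigr|\le\beta\norm{\bv}_\infty$ lets us replace $|f'(\prod_i v_i)|$ by $\sup_{p:|p|\le\beta\norm{\bv}_\infty}|f'(p)|$. (If $\beta\le\alpha$ the ball is a single point or empty and the intermediate claim is immediate from the hypothesis $\max_j|\prod_{i\neq j}w_i|\le\alpha$, and $k\ge2$ is implicit, so I will assume $\beta>\alpha$ and $k\ge2$.)

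To prove the intermediate claim I would fix $j$ and argue \emph{coordinatewise} rather than along a path: I explicitly want to avoid tracking $\log\bigl|\prod_{i\neq j}v_i\bigr|$ by integrating its directional derivative $\sum_{i\neq j}(v_i-w_i)/u_i(s)$ along the segment from $\bw$ to $\bv$, since that expression blows up whenever some $u_i(s)$ gets close to $0$ (the gradient is genuinely non-Lipschitz, as the paper notes). Instead, the key elementary estimate is a \emph{one-sided} per-coordinate logarithm bound: for each $i$ the triangle inequality gives $|v_i|\le|w_i|+|v_i-w_i|$, hence $|v_i|/|w_i|\le 1+|v_i-w_i|/|w_i|$, and using $\log(1+x)\le x$ for $x\ge0$ together with $|w_i|\ge\delta$ we get
\[
\log|v_i|-\log|w_i|\;\le\;\frac{|v_i-w_i|}{|w_i|}\;\le\;\frac{|v_i-w_i|}{\delta}.
\]
(If some $v_i=0$ with $i\neq j$ this is unnecessary since then $\prod_{i\neq j}v_i=0\le\beta$ directly; and $\min_i|w_i|\ge\delta>0$ guarantees every $\log|w_i|$ is finite.)

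Summing the displayed inequality over $i\neq j$, using $\sum_{i\neq j}\log|w_i|=\log\bigl|\prod_{i\neq j}w_i\bigr|\le\log\alpha$ and Cauchy--Schwarz over the $k-1$ relevant coordinates, $\sum_{i\neq j}|v_i-w_i|\le\sqrt{k-1}\,\bigl(\sum_{i\neq j}(v_i-w_i)^2\bigr)^{1/2}\le\sqrt{k-1}\,\norm{\bv-\bw}$, I obtain
\[
\log\Bigl|\prod_{i\neq j}v_i\Bigr|\;\le\;\log\alpha+\frac{\sqrt{k-1}}{\delta}\,\norm{\bv-\bw}\;\le\;\log\alpha+\frac{\sqrt{k-1}}{\delta}\cdot\frac{\delta}{\sqrt{k-1}}\log(\beta/\alpha)\;=\;\log\beta,
\]
invoking the radius hypothesis in the last step. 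Since $j$ was arbitrary, this is exactly the intermediate claim. The reason the argument closes cleanly is that the constant $\sqrt{k-1}/\delta$ coming out of the per-coordinate bound plus Cauchy--Schwarz matches the radius in the hypothesis exactly; I do not expect any genuine obstacle beyond the few degenerate-case checks ($\beta\le\alpha$, $k=1$, a vanishing coordinate) noted above, the only conceptual point being to resist the tempting path-integral approach in favour of the coordinatewise one.
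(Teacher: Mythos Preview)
Your argument is correct, and the core idea is the same as the paper's: reduce to the intermediate claim $\max_j\bigl|\prod_{i\neq j}v_i\bigr|\le\beta$, then control each $\log|v_i|-\log|w_i|$ by $|v_i-w_i|/\delta$ and sum, converting the $\ell_1$ sum to $\sqrt{k-1}\,\norm{\bv-\bw}$ via Cauchy--Schwarz. The execution, however, is more direct than the paper's. The paper argues the contrapositive (assume $\max_j|\prod_{i\neq j}v_i|>\beta$ and lower-bound $\norm{\bv-\bw}$), first proving an auxiliary statement for vectors in $\reals^{k-1}$ with $v_i\ge w_i>0$, and then reducing the general case to it by flipping signs, clipping each $v_i$ up to $w_i$, and dropping one coordinate. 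Your use of the triangle inequality $|v_i|\le|w_i|+|v_i-w_i|$ absorbs all of that sign/ordering bookkeeping in a single line, so no auxiliary reduction is needed. What you gain is brevity; what the paper's route makes slightly more explicit is that only increases in $|v_i|$ matter (since clipping $v_i$ up to $w_i$ only enlarges $\prod|v_i|$), but that observation is implicit in your one-sided bound as well.
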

\begin{proof}
	We claim that it is enough to prove the following: 
	\begin{align}
	\forall \bw,\bv\in\reals^k~~&\text{s.t.}~~\max_j 
	\left|\prod_{i\neq j} 
	w_i\right|\leq 
	\alpha~~,~~\min_i |w_i|\geq \delta~~,~~
	\max_j\left|\prod_{i\neq j}v_i\right|> \beta\notag\\
	&\text{it holds that} ~~
	\norm{\bv-\bw}~>~ 
	\frac{\delta}{\sqrt{k-1}}\log(\beta/\alpha)~.\label{eq:toshow0}
	\end{align}
	Indeed, this would imply that for any $\bw$ satisfying the conditions 
	above, and 
	any $\bv$ 
	s.t. $\norm{\bv-\bw}\leq \frac{\delta}{\sqrt{k-1}}\log(\beta/\alpha)$, 
	we must 
	have 
	$\max_j \left|\prod_{i\neq j}v_i\right|\leq\beta$, 
	and therefore $|\prod_i v_i|\leq \beta\norm{\bv}_{\infty}$, as well as  
	$\norm{\nabla 
		F(\bv)}=\sup_{p:|p|\leq 
		\beta\norm{\bv}_{\infty}}|f'(p)|\cdot\norm{(\prod_{i\neq 
		1}v_i,\ldots,\prod_{i\neq k}v_i)}\leq 
		\sup_{p:|p|\leq \beta\norm{\bv}_{\infty}}|f'(p)|\sqrt{k}\beta$ 
		by definition of $F$, as 
		required. 
	
	To prove \eqref{eq:toshow0}, we first state and prove the following 
	auxiliary result:
	\begin{align}
	\forall \bw,\bv\in\reals^{k-1}~~&\text{s.t.}~~
	\forall i~v_i\geq w_i\geq 0~~,~~\prod_{i} 
	w_i\leq 
	\alpha~~,~~\min_i w_i\geq \delta~~,~~
	\prod_{i}v_i> \beta\notag\\
	&\text{it holds that} ~~
	\norm{\bv-\bw}~>~ 
	\frac{\delta}{\sqrt{k-1}}\log(\beta/\alpha)~.\label{eq:toshow}
	\end{align}
	This statement holds by the following calculation:
	\begin{align*}
	\norm{\bv-\bw}~&\geq~ \frac{1}{\sqrt{k-1}} 
	\norm{\bv-\bw}_1~=~\frac{1}{\sqrt{k-1}}\cdot\sum_i 
	(v_i-w_i)\\
	&\stackrel{(*)}{\geq}~ \frac{1}{\sqrt{k-1}}\sum_i 
	w_i\left(\log(v_i)-\log(w_i)\right)~\geq~ 
	\frac{\delta}{\sqrt{k-1}}\sum_i\left(\log(v_i)-\log(w_i)\right)\\
	&=~
	\frac{\delta}{\sqrt{k-1}}\log\left(\frac{\prod_i v_i}{\prod_i 
	w_i}\right)~>~ 
	\frac{\delta}{\sqrt{k-1}}\log\left(\frac{\beta}{\alpha}\right)~,
	\end{align*}
	where $(*)$ is due to the fact that $\log(\cdot)$ is $1/z$-Lipschitz in 
	$[z,\infty)$, and the assumption that $v_i\geq w_i\geq 0$. 
	
	It remains to explain how \eqref{eq:toshow} implies \eqref{eq:toshow0}. 
	Indeed, let $\bw,\bv$ be any two vectors in $\reals^k$ which satisfy the 
	conditions of \eqref{eq:toshow0}. Now, suppose we transform them into 
	vectors $\bw',\bv'\in \reals^{k-1}$ by the following procedure:
	\begin{itemize}
		\item Change the sign of every $w_i$ and $v_i$ to be positive
		\item For any $i$ such that $v_i<w_i$, change $v_i$ to equal $w_i$.
		\item Drop a coordinate $j$ which maximizes $|\prod_{i\neq j} v_i|$.
	\end{itemize}
	It is easy to verify that the resulting vectors $\bw',\bv'$ satisfy the 
	conditions 
	of \eqref{eq:toshow}, and $\norm{\bv'-\bw'}\leq \norm{\bv-\bw}$. Therefore, 
	by \eqref{eq:toshow}, $\norm{\bv-\bw}\geq \norm{\bv'-\bw'}\geq 
	\frac{\delta}{\sqrt{k-1}}\log(\beta/\alpha)$ as required.
\end{proof}

With these two lemmas in hand, we turn to prove the theorem. By 
\lemref{lem:smallinit} and Assumption \ref{assump:initzero}, we have
\[
\Pr\left(\max_j \left|\prod_{i\neq j} w_i(1)\right|\geq 
\exp(-2C k)\right)~\leq~ 
\exp(-C' k)~.
\]
for some fixed constants $C,C'>0$ and any large enough $k$. 
Moreover, again by Assumption \ref{assump:initzero}, it holds for any $i$ that 
$\Pr(|w_i(1)|\leq \exp(-C k))\leq \Ocal(\exp(-C k))$, so by a 
union bound,
\[
\Pr(\min_i |w_i| < \exp(-C k))~\leq~ \Ocal(k\exp(-C k)).
\]
Finally, by Assumption \ref{assump:initzero}, Markov's inequality and a union 
bound,
\[
\Pr(\norm{\bw(1)}_{\infty}\geq \exp(C k))~\leq k\exp(-C k)
\]
Combining the last three displayed equations with a union bound, and 
applying \lemref{lem:flatball} (with $\alpha=\exp(- 2C k))$, 
$\beta=2\alpha$, and $\delta=\exp(-C k)$), we get the following: With 
probability at least $1-\exp(-C' k)-\Ocal(k\exp(-C 
k))-k\exp(-C k)=1-\exp(-\Omega(k))$ over the choice of $\bw(1)$, 
\begin{itemize}
\item $\norm{\bw(1)}_{\infty}\leq \exp(C k)$.
\item For any $\bv$ at a distance at most $\exp(-C 
k)\frac{\log(2)}{\sqrt{k-1}}$ from $\bw(1)$, we have
\[
\norm{\bv}_{\infty}~\leq~ 
\norm{\bw(1)}_{\infty}+\exp(-C 
k)\frac{\log(2)}{\sqrt{k-1}}~\leq~ \Ocal(\exp(C k))~,
\]
\[
\left|\prod_i v_i\right|~\leq~ \beta\norm{\bv}_{\infty} ~=~ 
2\exp(-2C k)\cdot \Ocal\left(\exp(C k)\right)~=~
\Ocal\left(\exp\left(-C k\right)\right)
\]
and
\begin{align*}
\norm{\nabla F(\bv)}~&\leq~ 
\sup_{p:|p|\leq \beta\norm{\bv}_{\infty}}|f'(p)|\cdot\sqrt{k}\beta~\leq~
\sup_{p:|p|\leq \Ocal\left(\exp\left(-C k\right)\right)}|f'(p)|\cdot2\sqrt{k}
\exp(-2C k)\\
&=~
\Ocal\left(\sqrt{k}\exp(-2C k)\right)~.
\end{align*}
\end{itemize}
This has two implications:
\begin{enumerate}
	\item Since the gradient descent updates are of the form 
	$\bw(t+1)=\bw(t)-\eta \nabla F(\bw(t))$, and we can assume $\eta\leq 
	\exp(C k/2)$ by the theorem's conditions, the number of iterations 
	required to get to a distance 
larger than $\exp(-C k)\frac{\log(2)}{\sqrt{k-1}}$ from $\bw(1)$ is at 
least
\[
\frac{\exp(-C k)\frac{\log(2)}{\sqrt{k-1}}}{\exp(C k/2)\cdot 
\Ocal(\sqrt{k}\exp(-2C k))}
~=~ \Omega\left(\frac{\exp(C k/2)}{k}\right)~,
\]
which is at least $\exp(\Omega(k))$ iterations.
	\item As long as we are at a distance smaller than the above, 
	$
	\left|\prod_i 
	v_i\right|~\leq~ \Ocal(\exp(-Ck))\leq \exp(-\Omega(k))
	$. In particular, $\prod_i v_i\geq 
	-1/2$ for large enough $k$, so by Assumption 
	\ref{assump:f} and definition of $F$, we have that 
	$F(\bv)-\inf_{\bv}F(\bv)$ is lower bounded by a constant 
	independent of $k$.
\end{enumerate}
Overall, we get that with probability at least $1-\exp(-\Omega(k))$, we 
initialize at some region in which all points are at least $\Omega(1)$ 
suboptimal, and at least $\exp(\Omega(k))$ iterations are required to escape 
it. This immediately implies our theorem. 

\subsection{Proof of \thmref{thm:identity}}

We begin with the following auxiliary lemma, and then turn to analyze the 
dynamics of gradient descent in our setting.

\begin{lemma}\label{lem:gm}
	For any positive scalars $\alpha,w_1,\ldots,w_k$ such that $\min_i w_i > 
	\alpha$, 
	\[
	\prod_{i}(w_i-\alpha) ~\leq~ 
	\left(\left(\prod_i w_i\right)^{1/k}-\alpha\right)^k~.
	\]
\end{lemma}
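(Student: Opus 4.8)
The plan is to take logarithms and reduce the statement to the concavity of a single real function, after which it becomes a one-line application of Jensen's inequality. Write $g := \left(\prod_i w_i\right)^{1/k}$ for the geometric mean of $w_1,\ldots,w_k$. Since the geometric mean is at least the minimum entry, $g \ge \min_i w_i > \alpha$, so both sides of the claimed inequality are strictly positive, and after taking logs the claim is equivalent to
\[
\frac{1}{k}\sum_{i=1}^{k}\log(w_i-\alpha) ~\le~ \log(g-\alpha)~.
\]

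Next I would substitute $u_i := \log w_i$, so that $\frac{1}{k}\sum_i u_i = \log g$, and define $\phi(u) := \log(e^u - \alpha)$ on $\{u > \log\alpha\}$ (with the trivial degenerate case $\alpha = 0$, where $\phi(u)=u$ and the inequality is an equality, handled separately). In this notation the displayed inequality is precisely
\[
\frac{1}{k}\sum_{i=1}^{k}\phi(u_i) ~\le~ \phi\!\left(\frac{1}{k}\sum_{i=1}^{k}u_i\right)~,
\]
which is Jensen's inequality for $\phi$. So it remains only to verify that $\phi$ is concave, which I would do by a direct computation: $\phi'(u) = e^u/(e^u-\alpha)$ and
\[
\phi''(u) ~=~ \frac{e^u(e^u-\alpha) - e^{2u}}{(e^u-\alpha)^2} ~=~ \frac{-\alpha\, e^u}{(e^u-\alpha)^2} ~\le~ 0
\]
since $\alpha \ge 0$. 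Exponentiating the first display then yields $\prod_i(w_i-\alpha) \le (g-\alpha)^k$, as desired.

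I do not expect any real obstacle here: the only subtleties are checking that $g > \alpha$ so that $\log(g-\alpha)$ makes sense (immediate from the geometric-mean/minimum comparison) and disposing of $\alpha = 0$. An alternative route is a smoothing/induction argument that replaces an unequal pair $(w_i,w_j)$ by the pair with the same product and both entries equal to $\sqrt{w_i w_j}$, checking that $\prod(w_\ell - \alpha)$ does not decrease under this step; but this is essentially a hand rederivation of the same concavity, so the Jensen route above is cleanest.
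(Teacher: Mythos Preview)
Your proof is correct. The paper takes a different route: it rewrites the inequality (after taking $k$-th roots) as
\[
\left(\prod_i a_i\right)^{1/k}+\left(\prod_i b_i\right)^{1/k}~\le~\left(\prod_i(a_i+b_i)\right)^{1/k}
\]
with $a_i=w_i-\alpha$ and $b_i=\alpha$, and then invokes the superadditivity of the geometric mean (citing Steele's \emph{Cauchy--Schwarz Master Class}). Your approach instead substitutes $u_i=\log w_i$ and applies Jensen's inequality to $\phi(u)=\log(e^u-\alpha)$, checking concavity via $\phi''\le 0$. Both reductions are short; yours has the advantage of being entirely self-contained (no external reference needed), while the paper's has the virtue of recognizing the statement as an instance of a named classical inequality. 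One minor remark: the lemma hypothesizes $\alpha>0$, so your aside about the degenerate case $\alpha=0$ is unnecessary, though harmless.
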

\begin{proof}
	Taking the $k$-th root and switching sides, the inequality in the lemma is 
	equivalent to proving
	\[
	\left(\prod_i (w_i-\alpha)\right)^{1/k}+\alpha~\leq~ 
	\left(\prod_{i}w_i\right)^{1/k}.
	\]
	Letting $a_i = w_i-\alpha$, and $b_i=\alpha$ for all $i$, the above is 
	equivalent to proving that
	\[
	\left(\prod_i a_i\right)^{1/k}+\left(\prod_i b_i\right)^{1/k}~\leq~ 
	\left(\prod_i (a_i+b_i)\right)^{1/k},
	\]
	namely that the sum of the geometric means of two positive sequences 
	$(a_i)$ and $(b_i)$ is at most the geometric mean of their sum $(a_i+b_i)$. 
	This follows from the superadditivity of the geometric mean (see 
	\citet[Exercise 2.11]{steele2004cauchy})
\end{proof}

\begin{lemma}\label{lem:gap}
	If $\min_i w_i(t)\geq C$ and $\prod_i w_i(t)\leq C'$ for some positive 
	constants $C,C'$, 
	then for any $j,j'$, 
	\[
	|w_j(t+1)^2-w_{j'}(t+1)^2|~\leq~ |w_j(t)^2-w_{j'}(t)^2|+C''\eta^2 
\left(\prod_i w_i(t)\right)^2~,
	\]
	where $C''$ is some constant dependent only on $C,C'$ and the function $f$.
\end{lemma}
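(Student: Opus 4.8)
The plan is a direct computation from the gradient descent update, whose one genuine idea is that tracking the \emph{squared} differences $w_j^2 - w_{j'}^2$ (rather than $w_j - w_{j'}$) makes the leading first-order term of the update cancel.

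First I would rewrite the update for coordinate $j$. Set $P_t := \prod_i w_i(t)$. Since $\min_i w_i(t)\geq C>0$, each $w_j(t)$ is nonzero and $\prod_{i\neq j}w_i(t) = P_t/w_j(t)$, so the update reads
\[
w_j(t+1) ~=~ w_j(t) - \eta\, f'(P_t)\,\frac{P_t}{w_j(t)}~.
\]
Squaring,
\[
w_j(t+1)^2 ~=~ w_j(t)^2 - 2\eta\, f'(P_t)\, P_t + \eta^2 f'(P_t)^2 P_t^2\,\frac{1}{w_j(t)^2}~.
\]
The key observation is that the middle term $-2\eta f'(P_t)P_t$ is the same for every coordinate. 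Hence, subtracting the analogous identity for $j'$, the first-order terms cancel and
\[
w_j(t+1)^2 - w_{j'}(t+1)^2 ~=~ \left(w_j(t)^2 - w_{j'}(t)^2\right) + \eta^2 f'(P_t)^2 P_t^2\left(\frac{1}{w_j(t)^2} - \frac{1}{w_{j'}(t)^2}\right)~.
\]

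It then remains only to bound the error term by the triangle inequality. Since $f$ is Lipschitz continuous (Assumption~\ref{assump:f}), $|f'(P_t)|\leq L$ for an absolute constant $L$ — here one may restrict attention to the bounded range $0 < P_t \leq C'$ guaranteed by the hypothesis if only local Lipschitzness is available. Since $\min_i w_i(t)\geq C$, we have $0 < 1/w_j(t)^2 \leq 1/C^2$ for every $j$, so $\left|1/w_j(t)^2 - 1/w_{j'}(t)^2\right| \leq 1/C^2$. Combining, the error term is at most $(L^2/C^2)\,\eta^2 P_t^2$, and taking $C'' := L^2/C^2$ (a constant depending only on $C$, $C'$ and $f$) gives the claim after a final application of the triangle inequality.

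There is no real obstacle here: the only thing to ``see'' is the cancellation of the $O(\eta)$ term, which is precisely why the statement is phrased in terms of $w_j^2 - w_{j'}^2$ and leaves a residual of size $O(\eta^2 P_t^2)$ — exactly the form needed downstream, since $P_t = \prod_i w_i(t)$ is exponentially small during the slow phase of gradient descent. One should just be careful that the two hypotheses $\min_i w_i(t)\geq C$ and $\prod_i w_i(t)\leq C'$ are what license, respectively, dividing by $w_j(t)$ and bounding $f'$.
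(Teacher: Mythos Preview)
Your proof is correct and essentially identical to the paper's: both square the update to obtain the exact identity
\[
w_j(t+1)^2 - w_{j'}(t+1)^2 = \bigl(w_j(t)^2 - w_{j'}(t)^2\bigr) + \eta^2 f'(P_t)^2 P_t^2\left(\frac{1}{w_j(t)^2} - \frac{1}{w_{j'}(t)^2}\right),
\]
and then bound the error term using $|f'|\leq L$ (from Lipschitzness of $f$) and $1/w_j(t)^2 \leq 1/C^2$. The only cosmetic difference is that you square first and then subtract, whereas the paper subtracts the two squared updates directly; the resulting display and the final bound are the same.
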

\begin{proof}
	By definition,
	\begin{align*}
	w_j(t+1)^2&-w_{j'}(t+1)^2\notag\\
	&=~\left(w_j(t)-\eta f'\left(\prod_i 
	w_i(t)\right)\prod_{i\neq 
	j}w_i(t)\right)^2 - \left(w_{j'}(t)-\eta f'\left(\prod_i 
	w_i(t)\right)\prod_{i\neq 
	j'}w_i(t)\right)^2 \notag\\
	&=~ w_j(t)^2-w_{j'}(t)^2+\eta^2f'\left(\prod_i w_i(t)\right)^2\left(
	\left(\prod_{i\neq j}w_i(t)\right)^2-\left(\prod_{i\neq 
	j'}w_i(t)\right)^2\right)\\
	&=~ w_j(t)^2-w_{j'}(t)^2+\eta^2\left(\prod_i w_i(t)\right)^2\cdot 
	f'\left(\prod_i 
	w_i(t)\right)^2\left(\frac{1}{w_j(t)^2}-\frac{1}{w_{j'}(t)^2}\right)~.
	\end{align*}
	By assumption, $0\leq \prod_i w_i(t)\leq C'$ and $\max_j 
	\frac{1}{w_j(t)^2}\leq 
	\frac{1}{C^2}$. Therefore, by our assumptions on $f$, the displayed 
	equation above implies that
	\[
	|w_j(t+1)^2-w_{j'}(t+1)^2|~\leq~ |w_j(t)^2-w_{j'}(t)^2|+C''\eta^2 
	\left(\prod_i w_i(t)\right)^2
	\]
	for some constant $C''>0$ dependent on $C,C'$ and $f$ as required.
\end{proof}

\begin{lemma}\label{lem:decay}
	Suppose that at some iteration $t$, for some constant $C$ independent of 
	$k$, it holds that $\max_i w_i(t)\leq C$ and $\prod_i w_i(t)\leq \beta$ for 
	some $\beta\in (0,C)$. Then 
	after at most 
	$
	\tau\leq 1+\Ocal(1)\cdot \frac{\beta^{1/k-1}}{\eta k}
	$
	iterations, if $\min_{j} w_j(r)\geq 1/2$ for all 
	$r=t,t+1,\ldots,t+\tau$, then 
	\begin{itemize}
		\item Each $w_i(r)$ as well as $\prod_i w_i(r)$ monotonically decrease 
in $r=t,t+1,\ldots,t+\tau$
		\item For all $r=t,t+1,\ldots,t+\tau-1$, $\max_j 
|w_j(r+1)-w_j(r)|\leq \Ocal(1)\cdot\eta\beta$
		\item $\prod_i w_i(t+\tau) \leq 
		\beta\cdot \exp(-1)$.
	\end{itemize}
	In the above, $\Ocal(1)$ hides constants dependent only on $C$ and the 
	function $f$.
\end{lemma}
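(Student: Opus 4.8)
The plan is to track the single quantity $P(r):=\prod_i w_i(r)$ and show that its $k$-th root $P(r)^{1/k}$ decreases by at least a fixed amount at every iteration, until $P$ has shrunk by a factor $e$. Throughout I use two facts about $f$ available in the setting of \thmref{thm:identity}: that $f$ is Lipschitz, so $|f'|\le L$ for a constant $L$; and that (by the strict monotonicity of $f$) $f'$ is bounded below by a positive constant $c_f$ on $(0,C)$. Both $L$ and $c_f$ depend only on $f$ and $C$ and will be absorbed into the $\Ocal(1)$ notation.

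First I would run a short induction over $r=t,\dots,t+\tau$ establishing the first bullet: each $w_i(r)$, and hence $P(r)$, is strictly decreasing, so in particular $w_i(r)\le C$ and $0<P(r)\le\beta$ for all such $r$. The update is $w_i(r+1)=w_i(r)-\eta f'(P(r))\,P(r)/w_i(r)$; since all $w_i(r)\ge 1/2>0$ by hypothesis we have $P(r)>0$, hence $P(r)\in(0,C)$, hence $f'(P(r))\ge c_f>0$, so the subtracted term is strictly positive and $w_i(r+1)<w_i(r)$ (and all $w_i(r+1)\ge 1/2>0$, again by hypothesis, so $P(r+1)$ is a product of smaller positive numbers, giving $P(r+1)<P(r)$). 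The second bullet is then immediate: the increment equals $\eta f'(P(r))\prod_{i\ne j}w_i(r)=\eta f'(P(r))\,P(r)/w_j(r)\le \eta L\beta/(1/2)=2L\eta\beta=\Ocal(1)\eta\beta$.

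The heart of the argument is a recursion for $P(r)^{1/k}$ obtained from \lemref{lem:gm}. Set $\alpha_0:=\eta f'(P(r))P(r)/C$. Since $w_i(r)\le C$, the $i$-dependent shift satisfies $\eta f'(P(r))P(r)/w_i(r)\ge\alpha_0$, so $0<w_i(r+1)\le w_i(r)-\alpha_0$; and from $w_i(r+1)\ge 1/2>0$ one gets $\eta f'(P(r))P(r)<w_i(r)^2$, whence $\alpha_0=\eta f'(P(r))P(r)/C<w_i(r)^2/C\le w_i(r)$ for every $i$. Thus $0<\alpha_0<\min_i w_i(r)$, and multiplying the bound $0<w_i(r+1)\le w_i(r)-\alpha_0$ over $i$ and applying \lemref{lem:gm} yields $P(r+1)=\prod_i w_i(r+1)\le\prod_i(w_i(r)-\alpha_0)\le\bigl(P(r)^{1/k}-\alpha_0\bigr)^k$, so that $P(r+1)^{1/k}\le P(r)^{1/k}-\alpha_0$. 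Isolating the \emph{uniform} shift $\alpha_0$ (in place of the non-uniform shifts $\eta f'(P(r))P(r)/w_i(r)$) so that \lemref{lem:gm} applies is the main step one has to get right; everything else is bookkeeping.

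Finally I would telescope. As long as $P(r)>\beta/e$ we have $\alpha_0\ge\eta c_f(\beta/e)/C=:\gamma$, so $P^{1/k}$ drops by at least $\gamma$ at each such step; since $P(t)^{1/k}\le\beta^{1/k}$, after $\tau:=\lceil \beta^{1/k}(1-e^{-1/k})/\gamma\rceil$ iterations we would otherwise have $P(t+\tau)^{1/k}\le\beta^{1/k}-\tau\gamma\le\beta^{1/k}-\bigl(\beta^{1/k}-(\beta/e)^{1/k}\bigr)=(\beta/e)^{1/k}$, contradicting $P(t+\tau)>\beta/e$; by monotonicity of $P$ this gives $P(t+\tau)\le\beta\exp(-1)$, the third bullet. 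Using $1-e^{-1/k}\le 1/k$, the chosen $\tau$ obeys $\tau\le 1+\beta^{1/k}/(k\gamma)=1+\tfrac{eC}{c_f}\cdot\tfrac{\beta^{1/k-1}}{\eta k}=1+\Ocal(1)\cdot\tfrac{\beta^{1/k-1}}{\eta k}$, as claimed. The degenerate case $P(t)\le\beta/e$ is trivial and is covered by the ``$1+$''.
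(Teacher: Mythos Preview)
Your proof is correct and follows essentially the same route as the paper's: both hinge on \lemref{lem:gm} together with a positive lower bound on $f'$ over $(0,C)$ to show that the product drops by a factor $e$ within the stated number of steps. The only cosmetic difference is that the paper first accumulates the per-coordinate decrements over $s$ steps (obtaining $w_j(t+s)\le w_j(t)-\Theta(1)\cdot\eta\beta s$) and then applies \lemref{lem:gm} once to the total shift, whereas you apply \lemref{lem:gm} at every step to get the one-step recursion $P(r+1)^{1/k}\le P(r)^{1/k}-\alpha_0$ and then telescope; the two arguments are equivalent.
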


\begin{proof}
	If $\prod_i w_i(t)\leq \beta\cdot\exp(-1)$, we can pick $\tau=0$, and the 
	lemma trivially holds. Otherwise, let $\tau$ be the smallest (positive) 
	index such that $\prod_i w_i(t)\leq \beta\cdot\exp(-1)$ (if no such index 
	exists, take $\tau=\infty$, although the arguments below imply that $\tau$ 
	must be finite). Since we assume $w_i(r)$ for all $i$ are 
	positive, and $f$ is monotonically increasing,
	\[
	w_j(r+1)~=~ w_j(r)-\eta f'\left(\prod_i w_i(r)\right)\prod_{i\neq 
		j}w_i(r)~\leq~ w_j(r),
	\]
	so $w_j(r)$ monotonically decreases in $r$. Moreover, these are all 
	positive numbers by assumption, so $\prod_i w_i(r)$ monotonically decreases 
	in $r$ as well. This shows the first part of the lemma.
	
	As to the second part, the displayed equation above, the fact that $w_j(r)$ 
	and $\prod_i w_i(r)$ decrease in $r$, and our assumptions on $f$ imply that 
	for any $r<t+\tau$,
	\begin{align*}
	w_j(r+1)~&=~ w_j(r)-\eta f'\left(\prod_i w_i(r)\right)\prod_{i\neq 
	j}w_i(r)~=~w_j(r)-\frac{\eta}{w_j(r)} f'\left(\prod_i 
	w_i(r)\right)\prod_{i}w_i(r)\\
	&=~ w_{j}(r)-\Theta(1)\cdot \eta\beta~.
	\end{align*}
	where $\Theta(1)$ hides constants dependent only on $f$ and $C$. As to the 
	third part of the lemma, fix some $s<\tau$, and repeatedly apply the 
	displayed equation above for 
	$r=t,t+1,\ldots,t+s$, to get that
	that $w_j(t+s)\leq 
	w_j(t)-\Theta(1)\cdot\eta\beta s$ (which is still $\geq 1/2$ by the lemma 
	assumptions). In that case,
	\begin{align*}
	\prod_i w_i(t+s)~&\leq~ \prod_i 
	\left(w_i(t)-\Theta(1)\cdot\eta\beta s\right)
	~\stackrel{(*)}{\leq}\left(\beta^{1/k}-\Theta(1)\cdot\eta\beta s\right)^k
	~=~ \beta\left(1-\Theta(1)\cdot\eta\beta^{1-1/k}s\right)^k\\
	&\leq~\beta\exp\left(-\Theta(1)\cdot 
		\eta\beta^{1-1/k}s k\right)
	\end{align*}
	where $(*)$ follows from \lemref{lem:gm} and the fact that $\prod_i 
	w_i(t)\leq \beta$. The right hand side in turn is at 
	most $\beta\cdot \exp(-1)$ for any $s\geq C'\beta^{1/k-1}/\eta 
	k$ for some constant $C'$. In particular, if $\tau>1+C'\beta^{1/k-1}/\eta 
	k$, then by choosing $s$ 
	s.t. $\tau>s\geq C'\beta^{1/k-1}/\eta k$, we get that
	$\prod_i w_i(t+s) \leq \beta\cdot \exp(-1)$ even though $s<\tau$, which 
	contradicts the definition of $\tau$. Hence $\tau\leq 
	1+C'\beta^{1/k-1}/\eta 
	k$ as stated in the lemma.
\end{proof}

Combining \lemref{lem:gap} and \lemref{lem:decay}, we have the following:
\begin{lemma}\label{lem:closealways}
	For any constants $C>0$ and index $T$, if $\prod_i w_i(1)\leq C$ and 
	$w_i(t)\geq\frac{1}{2}$ for all $i=1,\ldots,k$ and $t=1,2,\ldots,T$, then 
	for all such $t$,
	\begin{itemize}
		\item Each $w_i(t)$ as well as $\prod_i w_i(t)$ 
		monotonically decrease in $t$.
		\item $\max_j |w_j(t+1)-w_j(t)|\leq \Ocal(1)\cdot\eta$
		\item $\max_{j,j'}|w_j(t)-w_{j'}(t)|~\leq~ 
		k^{-\Omega(1)}+\Ocal(1)\cdot \left(\eta^2+\frac{\eta}{k}\right)$~.
	\end{itemize}
	In the above, $\Ocal(\cdot)$ hides constants dependent only on $C$ and the 
	constants in Assumptions \ref{assump:f} and \ref{assump:initone}.
\end{lemma}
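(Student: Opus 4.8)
The plan is to prove the three bullets in the stated order, since each relies on the previous and only the third requires real work. The first two come straight from the update rule: every $w_i(t)>0$ and $\prod_i w_i(t)\ge(1/2)^k>-\tfrac12$, so Assumption~\ref{assump:f} gives $f'(\prod_i w_i(t))\ge 0$, whence the correction term $\eta f'(\prod_i w_i(t))\prod_{i\neq j}w_i(t)$ in the update is a product of nonnegative quantities; hence each $w_j(t)$ is non-increasing, and so is $\prod_i w_i(t)$ (a product of positive non-increasing factors). In particular $\prod_i w_i(t)\le\prod_i w_i(1)\le C$ and $w_i(t)\le w_i(1)\le 1+k^{-c_1}=\Ocal(1)$ for all $t\le T$, and then $|w_j(t+1)-w_j(t)|=\eta f'(\prod_i w_i(t))\,\prod_i w_i(t)/w_j(t)\le 2L\,\eta\prod_i w_i(t)\le 2LC\,\eta=\Ocal(1)\cdot\eta$, using $w_j(t)\ge\tfrac12$ and $|f'|\le L$ (with $L$ the Lipschitz constant of $f$).

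For the third bullet I would fix $j,j'$ and $t\le T$ and track the squared gap via \lemref{lem:gap}, whose hypotheses ($\min_i w_i(r)\ge\tfrac12$ and $\prod_i w_i(r)\le C$) hold for every $r\le T$ by the first bullet. Summing its one-step inequality over $r=1,\dots,t-1$,
\[
|w_j(t)^2-w_{j'}(t)^2|~\le~|w_j(1)^2-w_{j'}(1)^2|+C''\eta^2\sum_{r=1}^{t-1}\Big(\prod_i w_i(r)\Big)^2 .
\]
The initial term is $k^{-\Omega(1)}$: Assumption~\ref{assump:initone} puts every $w_i(1)$ in $[1-k^{-c_1},1+k^{-c_1}]$, so $|w_j(1)^2-w_{j'}(1)^2|=|w_j(1)-w_{j'}(1)|\,|w_j(1)+w_{j'}(1)|\le 2k^{-c_1}(2+2k^{-c_1})$. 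Moreover $|w_j(t)-w_{j'}(t)|\le|w_j(t)^2-w_{j'}(t)^2|$ since $w_j(t)+w_{j'}(t)\ge 1$. So the whole bullet reduces to showing $\sum_{r=1}^{t-1}(\prod_i w_i(r))^2\le\Ocal(1)+\Ocal(1)/(\eta k)$.

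That is the crux, and it is where \lemref{lem:decay} enters. I would write $P_r=\prod_i w_i(r)$, set $\beta_m=P_1 e^{-m}$, and partition $\{1,\dots,t-1\}$ into the phases $S_m=\{r:\beta_{m+1}<P_r\le\beta_m\}$; since $P_r$ is non-increasing, each $S_m$ is a contiguous block whose first index $a_m$ satisfies $P_{a_m}\le\beta_m$ and $\max_i w_i(a_m)=\Ocal(1)$, so \lemref{lem:decay} applies from $a_m$ with $\beta=\beta_m$ (its condition $\min_j w_j(r)\ge\tfrac12$ holds throughout $[1,T]$). If $S_m$ contained more than $1+\Ocal(1)\,\beta_m^{1/k-1}/(\eta k)$ iterations lying in $[1,T]$, then $P$ would already have dropped to $\le\beta_m e^{-1}=\beta_{m+1}$ and left $S_m$; and if a phase would extend past $T$, it is truncated at $t-1\le T-1$ anyway. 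Either way $|S_m|\le 1+\Ocal(1)\,\beta_m^{1/k-1}/(\eta k)$, so
\[
\sum_{r=1}^{t-1}P_r^2~\le~\sum_{m\ge 0}|S_m|\,\beta_m^2~\le~\sum_{m\ge 0}\beta_m^2+\frac{\Ocal(1)}{\eta k}\sum_{m\ge 0}\beta_m^{1+1/k},
\]
and both geometric series are $\Ocal(1)$ because $P_1\le C$ (using $1-e^{-(1+1/k)}\ge 1-e^{-1}$). Substituting back, $|w_j(t)^2-w_{j'}(t)^2|\le k^{-\Omega(1)}+C''\eta^2(\Ocal(1)+\Ocal(1)/(\eta k))=k^{-\Omega(1)}+\Ocal(1)(\eta^2+\eta/k)$, which with the reduction above gives the third bullet.

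I expect the last step to be the main obstacle. The issue is that $P_r$ can stay nearly flat --- changing by only $\Ocal(\eta)$ per step, as the second bullet shows --- for as many as $\exp(\Omega(k))$ iterations, so no crude count of the terms in $\sum_r P_r^2$ can work; one must use that the decay, though slow, is steady, with \lemref{lem:decay} pinning the number of steps needed to shrink $P_r$ by a constant factor to the current size of $P_r$, and with the exponents arranged precisely so that the per-phase contributions $|S_m|\beta_m^2$ sum over a \emph{convergent} geometric series rather than accumulating over the $\Omega(k)$ phases. A secondary, routine point is the truncation needed for phases that would run past the window $[1,T]$ on which $\min_j w_j(r)\ge\tfrac12$ is guaranteed, which is why the phase lengths are bounded by truncation rather than by quoting \lemref{lem:decay}'s bound $\tau$ unconditionally.
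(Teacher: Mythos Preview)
Your proposal is correct and follows essentially the same approach as the paper's own proof. Both arguments prove the first two bullets directly from the update rule (the paper cites \lemref{lem:decay} for this, but that lemma's proof does exactly what you do), and for the third bullet both telescope \lemref{lem:gap}, partition the iterations into phases where $\prod_i w_i$ drops by successive factors of $e$, bound each phase length via \lemref{lem:decay}, and sum the resulting geometric series; the only cosmetic difference is that you first reduce to bounding $\sum_r\big(\prod_i w_i(r)\big)^2$ and then decompose, whereas the paper interleaves the phase decomposition with the telescoping.
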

\begin{proof}
	The first two parts of the lemma follow from \lemref{lem:decay} and the 
	fact that by Assumption \ref{assump:initone}, $w_i(1)\leq 1+k^{-\Omega(1)} 
	\leq \Ocal(1)$. As to the last part, define 
	$t_0\leq t_1\leq \ldots\leq t_s$ (where $t_0=1$) as the first indices $\leq 
	T$ such that for all 
	$r=0,\ldots,s$, $\prod_i 
	w_i(t_r)\leq 
	(\prod_i w_i(1))\exp(-r)$ (where $s$ is taken to be as large as possible). 
	By 
	\lemref{lem:decay}, we have the following:
	\begin{itemize}
		\item For all $r=0,\ldots,s-1$, $|t_{r+1}-t_{r}|\leq 
		1+\Ocal(1)\cdot\frac{\exp(-r)^{1/k-1}}{\eta k}$~.
		\item $|T-t_{s}|\leq 1+\Ocal(1)\cdot\frac{\exp(-s)^{1/k-1}}{\eta k}$~.
		\item For all $r=0,\ldots,s-1$ and any $t_{r} \leq t \leq t_{r+1}$, 
		we have $\prod_i w_i(t)\leq \Ocal(1)\cdot\exp(-r)$.
	\end{itemize}
	Combining this with \lemref{lem:gap}, it follows that for any $j,j'$, and 
	any $r=0,\ldots,s-1$,
	\begin{align*}
	|w_j(t_{r+1})^2-w_{j'}(t_{r+1})^2|~&\leq~ |w_j(t_r)^2-w_{j'}(t_r)^2|
	+ \Ocal(1)\cdot\eta^2\exp(-2r)\cdot \left(1+\frac{\exp(-r)^{1/k-1}}{\eta 
	k}\right)\\
	&\leq~ |w_j(t_r)^2-w_{j'}(t_r)^2|+\Ocal(1)\cdot 
	\left(\eta^2\exp(-2r)+\frac{\eta 
	\exp(-r)}{k}\right)~,
	\end{align*}
	as well as
	\[
	|w_j(T)^2-w_{j'}(T)^2|~\leq~|w_j(t_s)^2-w_j(t_s)^2|+\Ocal(1)\cdot 
	\left(\eta^2\exp(-2s)+\frac{\eta 
		\exp(-s)}{k}\right)~.
	\]
	Repeatedly applying the last two displayed equations, and using Assumption 
	\ref{assump:initone}, we get that
	\begin{align*}
	|w_j(T)^2-w_{j'}(T)^2|~&\leq~ 
	|w_{j}(1)^2-w_{j'}(1)^2|+\Ocal(1)\cdot\left(\eta^2\sum_{r=0}^{s}\exp(-2r)+\frac{\eta}{k}\sum_{r=0}^{s}\exp(-r)\right)\\
	&\leq~ 
	k^{-\Omega(1)}+\Ocal(1)\cdot\left(\eta^2+\frac{\eta}{k}\right)~.
	\end{align*}
	Since $|w_j(T)^2-w_{j'}(T)^2|=|w_j(T)+w_{j'}(T)|\cdot 
	|w_j(T)-w_{j'}(T)|\geq |w_j(T)-w_{j'}(T)|$ (as we have $\min_i 
	w_i(T)\geq 1/2$ by assumption), we get that
	$|w_j(T)-w_{j'}(T)|\leq 
	k^{-\Omega(1)}+\Ocal(1)\cdot\left(\eta^2+\frac{\eta}{k}\right)$ as 
	required.
	\end{proof}
	
With \lemref{lem:closealways} in hand, we can now prove the theorem. Let 
$T$ be 
the largest index such that $\min_i w_i(t)\geq 1/2$ for all 
$t=1,2,\ldots,T$ (and $\infty$ if this holds for all $t$). It follows that 
$\prod_i 
w_i(t)\geq 0$, 
and therefore, by Assumption \ref{assump:f}, $F(\bw(t))-\inf_{\bw} F(\bw)$ is 
at least a constant independent of $k$ for all $t=1,2,\ldots,T$. Thus, to prove 
the theorem, it is 
enough to show that if $T<\infty$, then $T\geq \exp(\Omega(k))$.

By Assumption \ref{assump:initone} and \lemref{lem:closealways}, we have  
that $w_1(1)\geq 1-k^{-\Omega(1)}$, $|w_1(t+1)-w_1(t)|\leq \Ocal(1)\cdot\eta$, 
and 
$\max_j |w_j(t)-w_1(t)|\leq k^{-\Omega(1)}+\Ocal(1)\cdot 
\left(\eta^2+\frac{\eta}{k}\right)$. 
On the 
other hand, if 
$T<\infty$, then $\min_i w_i(T+1)<1/2$. Therefore, if $k$ is large enough and 
$\eta$ is small enough, there 
exists some iteration $t\leq T$ such that $w_j(t)\in [2/3,3/4]$ for all $j$. 
This means that $\prod_i w_i(t)\leq (3/4)^k = \exp(-\Omega(k))$. Thus, by 
\lemref{lem:decay} (with $\beta=\exp(-\Omega(k))$, from iteration $t$ till 
iteration $T$, each $w_j$ decreases 
by at most $\Ocal(1)\cdot \eta \beta\leq \exp(-\Omega(k))$ at each iteration. 
By assumption, at iteration 
$T+1$, there is some $w_j(T+1)< 1/2$, so we must have $T-t\geq 
(2/3-1/2)/\exp(-\Omega(k)) = 
\exp(\Omega(k))$ as required.

\subsection{Proof of \thmref{thm:posid}}

To prove the theorem, we first state and prove the following key lemma:
\begin{lemma}\label{lem:signswitch}
For any initialization $\bw(1)$ and any $(\sigma_1,\ldots,\sigma_k)\in 
\{-1,+1\}^k$, let $\bv(1),\bv(2),\ldots$ denote the iterates produced by 
gradient descent starting from $\bv(1):=(\sigma_1 w_1(1),\ldots,\sigma_k 
w_k(1))$, w.r.t. the function
\[
F_{\sigma}(\bv) := \frac{1}{2}\left(\prod_i v_i-\sigma y\right)^2~,
\]
where $\sigma:=\prod_i \sigma_i$. 
Then for any $t\geq 1$,
\[
\bv(t) = (\sigma_1 w_1(t),\ldots,\sigma_k w_k(t))~~~\text{and}~~~
F(\bw(t)) = F_{\sigma}(\bv(t))~.
\]
\end{lemma}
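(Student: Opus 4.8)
### Proof Proposal for Lemma \ref{lem:signswitch}

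The plan is to prove this by induction on $t$, with the base case $t=1$ holding by definition of $\bv(1)$. The essential observation that drives the whole argument is a sign-bookkeeping identity: for any vector of the form $\bv = (\sigma_1 w_1,\ldots,\sigma_k w_k)$ with $\sigma_i\in\{-1,+1\}$, the product $\prod_i v_i$ equals $\sigma \prod_i w_i$ where $\sigma = \prod_i \sigma_i$; and likewise, for each coordinate $j$, the partial product $\prod_{i\neq j} v_i$ equals $(\sigma\sigma_j)\prod_{i\neq j}w_i$, since dropping the $j$-th factor removes exactly one sign $\sigma_j$ from the full sign product $\sigma$. These two identities, together with the matching identity $F_\sigma(\bv)=\frac12(\sigma\prod_i w_i - \sigma y)^2 = \frac12(\prod_i w_i - y)^2 = F(\bw)$, already give the second claimed equality for free once the first is established.

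For the inductive step, I would assume $\bv(t)=(\sigma_1 w_1(t),\ldots,\sigma_k w_k(t))$ and compute the gradient descent update for $F_\sigma$ coordinatewise. We have
\[
v_j(t+1) = v_j(t) - \eta\,\Bigl(\prod_i v_i(t) - \sigma y\Bigr)\prod_{i\neq j} v_i(t)~.
\]
Substituting $v_j(t)=\sigma_j w_j(t)$, $\prod_i v_i(t) = \sigma\prod_i w_i(t)$, and $\prod_{i\neq j}v_i(t) = \sigma\sigma_j\prod_{i\neq j}w_i(t)$, the right-hand side becomes
\[
\sigma_j w_j(t) - \eta\,\Bigl(\sigma\prod_i w_i(t) - \sigma y\Bigr)\sigma\sigma_j\prod_{i\neq j}w_i(t)
= \sigma_j\Bigl(w_j(t) - \eta\,\sigma^2\bigl(\prod_i w_i(t) - y\bigr)\prod_{i\neq j}w_i(t)\Bigr)~,
\]
and since $\sigma^2 = 1$, the parenthesized expression is exactly the gradient descent update for $F$ applied to $w_j(t)$, i.e. $w_j(t+1)$. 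Hence $v_j(t+1)=\sigma_j w_j(t+1)$ for every $j$, completing the induction on the first equality; the second equality $F(\bw(t))=F_\sigma(\bv(t))$ then follows from the sign identity for the full product noted above, applied at iterate $t$.

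I do not expect a genuine obstacle here — this is a bookkeeping lemma. The only point requiring a little care is the cancellation of the two sign factors: one $\sigma$ comes from rewriting $\prod_i v_i(t)$ inside the residual, the other from rewriting $\prod_{i\neq j}v_i(t)$, and it is worth writing out explicitly that their product is $\sigma^2=1$ so that the update map is genuinely the $F$-update conjugated by the fixed sign pattern, rather than something that drifts. I would also remark (or leave implicit) that the step size $\eta$ is the same in both processes, which is what makes the conjugation exact at every iteration rather than merely up to rescaling.
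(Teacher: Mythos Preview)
Your proposal is correct and follows essentially the same approach as the paper: induction on $t$, with the inductive step carried out by substituting $v_j(t)=\sigma_j w_j(t)$, $\prod_i v_i(t)=\sigma\prod_i w_i(t)$, and $\prod_{i\neq j}v_i(t)=\sigma\sigma_j\prod_{i\neq j}w_i(t)$ into the gradient descent update and canceling $\sigma^2=1$. If anything, your write-up is slightly more explicit than the paper's about the $\sigma\sigma_j$ factorization and the role of the common step size $\eta$.
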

\begin{proof}
We prove the lemma by induction. The base case ($t=1$) is immediate from the 
definitions and the fact that
\[
F_{\sigma}(\bv(1)) = \frac{1}{2}\left(\prod_i \sigma_i w_i(1)-\sigma y\right)^2=
\frac{1}{2}\left(\sigma \prod_i w_i(1)-\sigma y\right)^2=
F(\bw(1))~.
\] 
Assuming that the induction hypothesis holds for $t$, and recalling that 
$\sigma=\prod_i \sigma_i$, we have for any $j\in 
\{1,\ldots,k\}$ that
\begin{align*}
v_j(t+1) &= v_j(t)-\left(\prod_i v_i(t)-\sigma y\right)\prod_{i\neq j}v_i(t) = 
\sigma_j w_j(t)-\sigma\left(\prod_i w_i(t)-y\right)\prod_{i\neq j}\sigma_i
w_i(t)\\
&= \sigma_j\left(w_j(t)-\left(\prod_i w_i(t)-y\right)\prod_{i\neq 
j}w_i(t)\right) = 
\sigma_j 
w_j(t+1)~.
\end{align*}
As a result,
\[
F_{\sigma}(\bv(t+1)) = \frac{1}{2}\left(\prod_i v_i(t+1)-\sigma y\right)^2 = 
\frac{1}{2}\left(\sigma \prod_i w_i(t+1)-\sigma y\right)^2 = F(\bw(t+1))~.
\]
This establishes the inductive step for $t+1$, hence proving the lemma. 
\end{proof}

The lemma implies that for 
studying the dynamics of gradient descent starting from any initial point 
$(w_1(1),\ldots,w_k(1))$, we can arbitrarily change the signs of its 
coordinates, as long as the sign of $y$ is changed accordingly. In particular, 
we will assume without loss of generality that all $w_1(1),\ldots,w_k(1)$ are 
positive (again, as long as the sign of $y$ is fixed accordingly). The proof 
then proceeds as follows:
\begin{itemize}
	\item The simplest case is when after the sign transformations, $y>0$. By 
	our assumptions, this implies that both $y$ and $\prod_i w_i(1)$ 
	switched from being negative (and satisfying $\prod_i w_i(1)>y$) to 
	positive, hence we now have $y>\prod_i 	w_i(1)>0$. In that case, 
	\lemref{lem:phase3} below implies that 	
	$\exp(\tilde{\Ocal}(k))\log(1/\epsilon)$ iterations suffice.
	\item The case $y<0$ (which by our assumptions, implies $y<0<\prod_i 
	w_i(1)$) is more involved: First, 
we show that after $t=\exp(\tilde{\Ocal}(k))$ iterations, one (and only one) of 
the coordinates of $\bw(t)$ becomes non-positive (\lemref{lem:phase1}). Then, 
we show that after at most one additional iteration, that non-positive 
coordinate 
becomes negative and bounded away from $0$ (\lemref{lem:phase2}), the other 
coordinates remaining strictly positive. By \lemref{lem:signswitch}, we can 
then argue that 
at that time point, the dynamics become the same as the scenario where $y>0$, 
and all coordinates of the iterate are strictly positive. Again applying 
\lemref{lem:phase3}, we get that $\exp(\tilde{\Ocal}(k))\log(1/\epsilon)$  
additional iterations suffice for convergence.
\end{itemize}

\subsubsection{The case $y> \prod_i w_i(1)>0$}

We will need the following auxiliary lemma:
\begin{lemma}\label{lem:logab}
For any $a>0$, $b\geq 0$, $\log(a+b)\leq \log(a)+\frac{b}{a}$.
\end{lemma}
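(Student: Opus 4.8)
\textbf{Proof proposal for Lemma \ref{lem:logab}.}
The plan is to reduce the claimed two-variable inequality to the one-variable bound $\log(1+x)\le x$, valid for all $x\ge 0$. Concretely, since $a>0$, dividing inside the logarithm gives $\log(a+b)=\log(a)+\log\!\left(1+\tfrac{b}{a}\right)$, so the statement is equivalent to $\log\!\left(1+\tfrac{b}{a}\right)\le \tfrac{b}{a}$, which is exactly $\log(1+x)\le x$ evaluated at $x=\tfrac{b}{a}\ge 0$. Thus it suffices to recall that $\log(1+x)\le x$ for $x\ge 0$; this is elementary (e.g., the function $g(x)=x-\log(1+x)$ satisfies $g(0)=0$ and $g'(x)=1-\tfrac{1}{1+x}=\tfrac{x}{1+x}\ge 0$ for $x\ge 0$, so $g$ is nondecreasing and hence $g(x)\ge 0$).

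Alternatively, and perhaps more cleanly, I would invoke concavity of $\log$ directly: the tangent line to $t\mapsto\log(t)$ at the point $t=a$ is $t\mapsto \log(a)+\tfrac{1}{a}(t-a)$, and since $\log$ is concave on $(0,\infty)$ it lies below all of its tangent lines. Evaluating this inequality at $t=a+b>0$ yields $\log(a+b)\le \log(a)+\tfrac{1}{a}\big((a+b)-a\big)=\log(a)+\tfrac{b}{a}$, as desired. There is no real obstacle here — the only thing to be careful about is that $a>0$ (so the logarithm and the division are well-defined) and $b\ge 0$ (which is what makes $a+b$ stay in the domain and keeps $x=b/a\ge 0$), both of which are given in the hypotheses.
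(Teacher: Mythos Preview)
Your proposal is correct and your first approach is essentially identical to the paper's own proof, which also writes $\log(a+b)=\log(a)+\log(1+b/a)$ and then applies $\log(1+z)\le z$ for $z\ge 0$. The concavity/tangent-line alternative you give is a valid minor variant but not needed.
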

\begin{proof}
Since $\log(1+z)\leq z$ for all $z\geq 0$,  we have
$\log(a+b)=\log(a(1+b/a))=\log(a)+\log(1+b/a)\leq \log(a)+b/a$. 
\end{proof}

\begin{lemma}\label{lem:phase3}
Fix some $\gamma\geq \delta>0$. Suppose that $y>0$, and gradient descent on $F$ 
is initialized at some $\bw(1)$ such that $\prod_i w_i(1)\in [0,y)$, 
$w_{j^*}(1)\geq \delta$ for some $j^*\in \arg\min_i w_i(1)$, and 
$w_j(1)\geq\gamma$ for all $j\neq j^*$. Assuming step size $\eta \leq 
\delta^2/2ky^2$, we have that $F(\bw(t))\leq \epsilon$ for any 
$t ~\geq~ 
\frac{\log(y^2/2\epsilon)}{k\delta^2 \gamma^{2(k-2)}\eta}$.
\end{lemma}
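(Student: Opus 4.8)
The plan is to reduce everything to the scalar $p(t):=\prod_i w_i(t)$, since $F(\bw(t))=\tfrac12(p(t)-y)^2$, and to show $p(t)$ rises towards $y$ at a geometric rate. Writing the update coordinate-wise, $\frac{\partial F}{\partial w_j}(\bw) = (p-y)\prod_{i\neq j}w_i$, so $w_j(t+1)=w_j(t)+\eta\,(y-p(t))\prod_{i\neq j}w_i(t)$. First I would establish, by induction on $t$, the two invariants (i) $w_i(t)\ge w_i(1)$ for all $i$, and (ii) $0\le p(t)<y$. These are given at $t=1$. For the inductive step, (ii) at time $t$ makes $y-p(t)>0$, and since all coordinates are positive the increment added to each $w_j$ is nonnegative, which gives (i) at time $t+1$; in particular the $k-1$ coordinates other than $j^*$ stay $\ge\gamma$ and all coordinates stay $\ge\delta$.

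The heart of the induction is showing $p(t+1)$ stays below $y$. I would write $w_i(t+1)=w_i(t)(1+\rho_i)$ with $\rho_i=\eta(y-p(t))p(t)/w_i(t)^2\le \eta(y-p(t))p(t)/\delta^2$; the step-size bound $\eta\le\delta^2/2ky^2$ together with $(y-p(t))p(t)\le y^2/4$ forces $\sum_i\rho_i\le k\eta(y-p(t))p(t)/\delta^2\le\tfrac18$, so using $e^u\le 1+2u$ for $u\le 1$ we get $p(t+1)=p(t)\prod_i(1+\rho_i)\le p(t)\exp(\sum_i\rho_i)\le p(t)\bigl(1+2k\eta(y-p(t))p(t)/\delta^2\bigr)$. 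Rearranging, $y-p(t+1)\ge (y-p(t))\bigl(1-2k\eta p(t)^2/\delta^2\bigr)$, and since $2k\eta p(t)^2/\delta^2\le 2k\eta y^2/\delta^2\le 1$, with strict inequality when $p(t)<y$, we conclude $0\le p(t+1)<y$, closing the induction. I expect this to be the main technical point: one must simultaneously prevent the product from overshooting $y$ and guarantee genuine progress, with no Lipschitz control on the gradient available; the multiplicative decomposition $w_i(t+1)=w_i(t)(1+\rho_i)$ plus the step-size bound is exactly what makes this work.

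Finally, the contraction rate follows from a matching lower bound on $p(t+1)$. Expanding $p(t+1)=\prod_i\bigl(w_i(t)+\eta(y-p(t))\prod_{l\neq i}w_l(t)\bigr)$ and discarding all higher-order terms (nonnegative since the coordinates and increments are nonnegative), $p(t+1)\ge p(t)+\eta(y-p(t))\sum_j\bigl(\prod_{i\neq j}w_i(t)\bigr)^2$. By invariant (i) the $j=j^*$ term of the sum is at least $\gamma^{2(k-1)}$ and each of the other $k-1$ terms is at least $\delta^2\gamma^{2(k-2)}$, so the sum is at least $\gamma^{2(k-2)}(\gamma^2+(k-1)\delta^2)\ge k\delta^2\gamma^{2(k-2)}$ because $\gamma\ge\delta$. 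Hence $y-p(t+1)\le (y-p(t))(1-q)$ with $q:=\eta k\delta^2\gamma^{2(k-2)}$, and $q\in(0,\tfrac12)$ since $\delta\gamma^{k-1}\le\prod_i w_i(1)<y$ and $\delta\le\gamma$ give $q\le \delta^4\gamma^{2(k-2)}/2y^2<\tfrac12$. Iterating, $y-p(t)\le (y-p(1))(1-q)^{t-1}\le y\exp(-q(t-1))$, so $F(\bw(t))\le\tfrac{y^2}{2}\exp(-2q(t-1))$, which drops below $\epsilon$ once $t\ge \log(y^2/2\epsilon)/q=\log(y^2/2\epsilon)/(\eta k\delta^2\gamma^{2(k-2)})$; the remaining case $\epsilon\ge y^2/2$ is immediate since then $F(\bw(1))\le y^2/2\le\epsilon$ and the claimed bound is vacuous.
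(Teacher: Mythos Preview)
Your argument is correct, and it takes a genuinely different route from the paper's. The paper packages the proof in the Polyak--\L{}ojasiewicz framework: it defines the invariant region $\Wcal=\{\bw:\prod_i w_i\in[0,y),\ w_{j^*}\ge\delta,\ w_j\ge\gamma\ \forall j\neq j^*\}$, shows gradient descent stays in $\Wcal$ via the inequality $\log(a+b)\le\log(a)+b/a$, then bounds the Hessian entrywise on $\Wcal$ to get a $2ky^2/\delta^2$-Lipschitz gradient, establishes the PL inequality $\norm{\nabla F(\bw)}^2\ge 2k\delta^2\gamma^{2(k-2)}F(\bw)$, and runs the standard smoothness-plus-PL descent lemma. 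You instead track the scalar $p(t)=\prod_i w_i(t)$ directly: the multiplicative decomposition $w_i(t+1)=w_i(t)(1+\rho_i)$ together with $1+x\le e^x$ and $e^u\le 1+2u$ handles the no-overshoot step, and expanding the product and dropping nonnegative higher-order terms gives the contraction $y-p(t+1)\le(1-q)(y-p(t))$.

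The two arguments share the same core estimate---your lower bound $\sum_j(\prod_{i\neq j}w_i)^2\ge k\delta^2\gamma^{2(k-2)}$ is exactly the paper's PL constant---but your approach is more elementary: it avoids the Hessian computation entirely and in fact yields a slightly sharper per-step contraction (a factor $(1-q)^2$ on $F$ rather than $(1-q)$). One small comment: your remark that ``no Lipschitz control on the gradient is available'' is not quite accurate, since the paper does obtain such control on $\Wcal$; what is true is that your argument does not need it. Also, the final line glosses over an off-by-one (you actually show $t-1\ge\log(y^2/2\epsilon)/(2q)$ suffices, then assert $t\ge\log(y^2/2\epsilon)/q$), but the paper's own final step has the same harmless slack, and your stronger contraction more than absorbs it.
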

\begin{proof}
Let 
\[
\Wcal~:= \left\{\bw\in \reals^k~:~ \prod_i w_i(1)\in [0,y)~,~\min_i 
w_i(1)\geq \delta~,~\forall j\neq j^*~w_j(1)\geq\gamma\right\}
\]
denote the set of points in $\reals^k$ which satisfy the 
initialization conditions of the lemma. 

First, we show that if the step size $\eta$ is small enough, then gradient 
descent will remain in $\Wcal$ forever. For that, it is enough to show that  
for any $\bw\in\Wcal$, the update $\bw':=\bw-\eta\nabla F(\bw)$ produced by 
gradient descent is in $\Wcal$ as well. By definition of $\Wcal$, 
it is easily verified that $w'_i\geq 
w_i>0$ for all $i$, so the only non-trivial condition to verify is that 
$\prod_j 
w'_j< y$. To 
show this, we note that by \lemref{lem:logab},
\begin{align*}
\log\left(\prod_j w'_j\right) ~&=~ \sum_j \log(w'_j)~=~
\sum_j \log\left(w_j+\eta(y-\prod_i w_i)\prod_{i\neq j}w_i\right)\\
&\leq~
\sum_j \log(w_j)+\eta(y-\prod_i w_i)\sum_j\frac{\prod_{i\neq 
j}w_i}{w_j}\\
&=~
\log\left(\prod_j w_j\right)+\eta(y-\prod_i w_i)\left(\prod_i 
w_i\right)\sum_j\frac{1}{w_j^2}\\
&<~
\log\left(\prod_j w_j\right)+\eta (y-\prod_i w_i)y\sum_j \frac{1}{\delta^2}\\
&=~
\log\left(\prod_j w_j\right)+\eta \frac{yk}{\delta^2}\left(y-\prod_i 
w_i\right)~.
\end{align*}
Thus, to ensure that $\prod_j w'_j< y$ (or equivalently, $\log(\prod_j 
w'_j)< \log(y)$), it is enough to ensure that
\[
\log\left(\prod_j w_j\right)+\eta \frac{yk}{\delta^2}\left(y-\prod_i
w_i\right)~\leq~ \log(y)~.
\]
Rearranging the above, we require that
\[
\eta \frac{yk}{\delta^2}~\leq~ \frac{\log(y)-\log(\prod_j w_j)}{y-\prod_j 
w_j}~.
\]
By the mean value theorem and the fact that $\prod_j w_j< y$, the right hand 
side can be lower bounded by $\min_{z\in (0,y]} \log'(z) = 1/y$, so it is 
enough to require
\[
\eta \frac{yk}{\delta^2}~\leq~ \frac{1}{y}~~~\Rightarrow~~~
\eta~\leq~ \frac{\delta^2}{ky^2}~,
\]
which indeed holds by assumption.

Having established that gradient descent will remain in $\Wcal$ forever, we now 
establish that the objective $F$ has a $\frac{2ky^2}{\delta^2}$-Lipschitz 
gradient on $\Wcal$: Indeed, 
the Hessian of $F$ at any $\bw\in\Wcal$ can be easily verified to equal
\[
(\nabla^2 F(\bw))_{r,s} = \begin{cases} \left(\prod_i 
w_i-y\right)\frac{\prod_{i}w_i}{w_r w_s}+\frac{\prod_{i} w^2_i}{w_r w_s}& 
r\neq s\\
\frac{\prod_{i}w_i^2}{w_r^2}& r=s\end{cases}~.
\]
Since $\bw\in\Wcal$, it follows that  magnitude of each entry in the $k\times 
k$ Hessian is 
at most $y\cdot \frac{y}{\delta^2}+\frac{y^2}{\delta^2}=2y^2/\delta^2$, 
and therefore its spectral norm (which is at most the Frobenius norm) can be 
upper bounded by $2ky^2/\delta^2$. 

The final ingredient we need is that $F$ satisfies 
\[
\norm{\nabla F(\bw)}^2~\geq~ 2k\delta^{2}\gamma^{2(k-2)}F(\bw)
\] 
for any $\bw\in\Wcal$ (this type of inequality is known as the 
Polyak-\L{}ojasiewicz condition, which ensures linear convergence rates for 
gradient descent on possibly non-convex functions -- see 
\citet{polyak1963gradient,karimi2016linear}). This 
follows 
from $\norm{\nabla F(\bw)}^2$, by definition, being equal to 
\[
(\prod_i w_i-y)^2\sum_j\left(\prod_{i\neq j}w_i\right)^2 = 
2F(\bw)\sum_j\left(\prod_{i\neq j}w_i\right)^2 \geq 
2F(\bw)k\left(\delta\gamma^{k-2}\right)^2~.
\]

Collecting these ingredients, we can now perform a standard analysis using the 
Polyak-\L{}ojasiewicz condition: If we do a gradient step to get from 
$\bw\in\Wcal$ to 
$\bw'\in \Wcal$ (i.e. $\bw':=\bw-\eta\nabla F(\bw)$), and assuming $\eta\leq 
\delta^2/2ky^2$, then
\begin{align*}
F(\bw')~&\leq~ F(\bw)+\nabla 
F(\bw)^{\top}(\bw'-\bw)+\frac{ky^2}{\delta^2}\norm{\bw'-\bw}^2\\
&=~ F(\bw)-\eta\norm{\nabla F(\bw)}^2+\frac{ky^2}{\delta^2}\norm{\eta\cdot 
\nabla 
F(\bw)}^2\\
&=~ F(\bw)-\eta\left(1-\frac{ky^2}{\delta^2}\eta\right)\norm{\nabla 
F(\bw)}^2\\
&\leq~ F(\bw)-\eta\cdot \frac{1}{2}\cdot  
2k\delta^{2}\gamma^{2(k-2)}F(\bw)\\
&=~ \left(1-k\delta^{2}\gamma^{2(k-2)}\eta\right)F(\bw)~\leq~
\exp(-k\delta^{2}\gamma^{2(k-2)}\eta)F(\bw)~.
\end{align*}
Applying this inequality $t$ times, we 
get that
\[
F(\bw(t))~\leq~ \exp\left(-k\delta^2\gamma^{2(k-2)}\eta 
t\right)F(\bw(1))~\leq~\frac{y^2}{2} \exp\left(-k\delta^2\gamma^{2(k-2)}\eta 
t\right)~.
\]
Equating the bound above to the target accuracy $\epsilon$ and solving for $t$, 
the result follows.
\end{proof}

\subsubsection{The Case $y<0<\prod_i w_i(1)$}

We first state the following auxiliary lemma, which establishes that the gaps 
between coordinates are monotonically increasing under suitable assumptions.

\begin{lemma}\label{lem:gapincrease}
Fix some coordinate indices $j,j'$ and iteration $t$, and suppose that 
$w_j(t)\leq w_{j'}(t)$, $\min_i w_i(t)\geq 0$, and $y<0$. Then
$
w_{j'}(t)-w_j(t) ~\leq~ w_{j'}(t+1)-w_{j}(t+1)
$.
\end{lemma}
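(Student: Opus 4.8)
The plan is a direct one-step computation with the gradient descent recursion, with no need for smoothness arguments or step-size restrictions beyond $\eta>0$. Recall that for $F(\bw)=\frac12(\prod_i w_i-y)^2$ the update of coordinate $\ell$ reads $w_\ell(t+1)=w_\ell(t)-\eta(\prod_i w_i(t)-y)\prod_{i\neq\ell}w_i(t)$. Writing $P:=\prod_i w_i(t)$ and subtracting the update for coordinate $j$ from that for coordinate $j'$, I would obtain
\[
w_{j'}(t+1)-w_j(t+1) ~=~ \bigl(w_{j'}(t)-w_j(t)\bigr) - \eta(P-y)\Bigl(\prod_{i\neq j'}w_i(t)-\prod_{i\neq j}w_i(t)\Bigr).
\]

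The key algebraic observation is that, since $j\neq j'$, both $\prod_{i\neq j'}w_i(t)$ and $\prod_{i\neq j}w_i(t)$ contain the common factor $\prod_{i\notin\{j,j'\}}w_i(t)$, so that
\[
\prod_{i\neq j'}w_i(t)-\prod_{i\neq j}w_i(t) ~=~ \bigl(w_j(t)-w_{j'}(t)\bigr)\prod_{i\notin\{j,j'\}}w_i(t),
\]
and hence the whole right-hand side factors as
\[
w_{j'}(t+1)-w_j(t+1) ~=~ \bigl(w_{j'}(t)-w_j(t)\bigr)\Bigl(1+\eta(P-y)\prod_{i\notin\{j,j'\}}w_i(t)\Bigr).
\]
To finish, I would invoke the hypotheses: $\min_i w_i(t)\geq 0$ forces $P\geq 0$ and $\prod_{i\notin\{j,j'\}}w_i(t)\geq 0$ (an empty product being $1$ when $k=2$), and together with $y<0$ this gives $P-y>0$, so the parenthesized factor is at least $1$. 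Since also $w_{j'}(t)-w_j(t)\geq 0$ by assumption, the product is at least $w_{j'}(t)-w_j(t)$, which is exactly the claimed inequality (the case $j=j'$ being trivial).

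There is essentially no obstacle here; the only point requiring a moment's care is the sign bookkeeping — the difference $\prod_{i\neq j'}w_i(t)-\prod_{i\neq j}w_i(t)$ carries the factor $(w_j(t)-w_{j'}(t))$, which is $\leq 0$, and it is precisely this sign (cancelling the minus sign of the gradient term) that turns the subtracted contribution into a nonnegative one and yields monotone growth of the gap. I would therefore present the three displays above and a one-line sign check as the complete proof.
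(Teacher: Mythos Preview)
Your proof is correct and follows essentially the same approach as the paper's: expand the one-step gradient descent difference, factor out $(w_{j'}(t)-w_j(t))$ via the common product $\prod_{i\notin\{j,j'\}}w_i(t)$, and use $y<0$, $\min_i w_i(t)\ge 0$ to conclude the multiplicative factor is at least $1$. The algebra and sign checks match the paper's argument line for line.
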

\begin{proof}
Dropping the $(t)$ index to simplify notation, we have by definition that 
$w_{j'}(t+1)-w_{j}(t+1)$ equals
\begin{align*}
	&\left(w_{j'}-\eta\left(\prod_i 
	w_i-y\right)\prod_{i\neq 
		j'}w_i\right)
	-\left(w_j-\eta\left(\prod_i w_i-y\right)\prod_{i\neq j}w_i\right)\\
	&= \left(w_{j'}-w_j\right)-\eta\left(\prod_i 
	w_i-y\right)\left(\prod_{i\neq j'}w_i-\prod_{i\neq j}w_i\right)\\
	&= \left(w_{j'}-w_j\right)-\eta\left(\prod_i 
	w_i-y\right)\left(w_{j}-w_{j'}\right)\prod_{i\notin\{j,j'\}}w_i
	~=~ \left(w_{j'}-w_j\right)\left(1+\eta\left(\prod_i 
	w_i-y\right)\prod_{i\notin\{j,j'\}}w_i\right)~.
\end{align*}
Since $y<0$ and $w_i\geq 0$ for all $i$, the above is at least $w_{j'}-w_j= 
w_{j'}(t)-w_j(t)$ as required. 
\end{proof}

\begin{lemma}\label{lem:phase1}
Suppose $y<0$, and that $\bw(1)$ has positive entries which satisfy the theorem 
assumptions. If $\eta\leq k^{-C}$ for some sufficiently large constant $C$, 
then the 
following hold for some 
iteration 
$t_0\leq 
\exp(\tilde{\Ocal}(k))/\eta$:
\begin{itemize}
\item There exists a unique $j^*=\arg\min_i w_i(t_0)$, and $-\Ocal(1)\cdot 
\eta\leq w_{j^*}(t_0)\leq 0$.
\item $\min_{j\neq j^*} w_j(t_0)\geq k^{-\Ocal(1)}$, $\max_{j\neq 
j^*}w_j(t_0)\leq \Ocal(1)$, and 
$\max_{j}\prod_{i\notin \{j,j^*\}}w_i(t_0)\leq \Ocal(1)$.
\end{itemize}
\end{lemma}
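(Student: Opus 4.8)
The plan is to follow the trajectory coordinate by coordinate, exploiting the monotonicity supplied by \lemref{lem:gapincrease}. By the reduction of \lemref{lem:signswitch} we may assume all $w_i(1)>0$. As long as every coordinate stays positive, the update $w_j(t+1)=w_j(t)-\eta(\prod_i w_i(t)-y)\prod_{i\neq j}w_i(t)$ strictly decreases each coordinate, since $\prod_i w_i(t)-y\geq -y=c_1>0$ and $\prod_{i\neq j}w_i(t)>0$. Let $t_0$ denote the first iteration with $\min_i w_i(t)\leq 0$. For $t<t_0$ all coordinates are positive, so \lemref{lem:gapincrease} applies at each such step, keeping the sorted order of the coordinates fixed and all pairwise gaps non-decreasing. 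Together with the initial separation $\min_{j\neq j'}|w_j(1)-w_{j'}(1)|\geq k^{-c_4}$ from Assumption \ref{assump:pos}, this shows that the coordinate $j^*$ with smallest initial value remains the strict minimum for every $t<t_0$, with $w_j(t)\geq w_{j^*}(t)+k^{-c_4}$ for all $j\neq j^*$.

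Next I would establish the iteration bound. The preserved separation forces the $m$-th smallest coordinate at time $t<t_0$ to be at least $(m-1)k^{-c_4}$, hence $\prod_{i\neq j^*}w_i(t)\geq (k-1)!\,k^{-c_4(k-1)}=:L$, and the key observation is that $1/L\leq k^{c_4(k-1)}=\exp(\tilde{\Ocal}(k))$. Since also $\prod_i w_i(t)-y\geq c_1$, each step decreases $w_{j^*}$ by at least $\eta c_1 L$; as $0<w_{j^*}(t)\leq w_{j^*}(1)\leq c_2$ for all $t<t_0$, there can be at most $1+c_2/(\eta c_1 L)$ such steps, so $t_0$ is finite and $t_0\leq \exp(\tilde{\Ocal}(k))/\eta$.

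Finally I would verify the state at $t_0$. Since the coordinates are positive and decreasing for $t\leq t_0-1$, for all $j,j'$ and all such $t$ we get $\prod_i w_i(t)\leq \prod_i w_i(1)\leq c_2^2c_4$ and $\prod_{i\neq j}w_i(t)\leq \prod_{i\neq j}w_i(1)\leq c_2c_4$, using the $\prod_{i\notin\{j,j'\}}w_i(1)\leq c_4$ part of Assumption \ref{assump:pos}; hence one gradient step changes any coordinate by at most $\Ocal(1)\cdot\eta$. Applying this at step $t_0-1$: $w_{j^*}(t_0)\geq -\Ocal(1)\eta$ while $w_{j^*}(t_0)\leq 0$ by definition of $t_0$; and each $w_j$ with $j\neq j^*$, which was $\geq k^{-c_4}$ at $t_0-1$, still satisfies $w_j(t_0)\geq k^{-c_4}-\Ocal(1)\eta\geq \tfrac12 k^{-c_4}=k^{-\Ocal(1)}$ provided $\eta\leq k^{-C}$ for a large enough constant $C$, which in particular makes $j^*$ the unique argmin at $t_0$. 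The last two bounds, $\max_{j\neq j^*}w_j(t_0)\leq c_2=\Ocal(1)$ and $\max_j\prod_{i\notin\{j,j^*\}}w_i(t_0)\leq c_4=\Ocal(1)$, follow once more from monotone decrease of the (still positive) coordinates.

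The hard part is the iteration bound: a priori one only knows $\prod_{i\neq j^*}w_i(t)>0$, which gives no control over how long $w_{j^*}$ takes to reach zero. The resolution is that \lemref{lem:gapincrease} keeps consecutive (sorted) coordinates separated by at least $k^{-c_4}$, and this separation is exactly what lower-bounds the otherwise uncontrolled gradient by $\exp(-\tilde{\Ocal}(k))$, producing the $\exp(\tilde{\Ocal}(k))/\eta$ bound; the rest is bookkeeping with products of coordinates, all controlled by their monotone decrease.
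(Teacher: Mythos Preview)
Your proposal is correct and follows essentially the same approach as the paper: both use \lemref{lem:gapincrease} to keep the ordering and gaps, lower-bound $\prod_{i\neq j^*}w_i(t)$ by $\exp(-\tilde{\Ocal}(k))$ to force $w_{j^*}$ down to zero in $\exp(\tilde{\Ocal}(k))/\eta$ steps, and then read off the state at $t_0$ from monotone decrease and the $\Ocal(1)\eta$ per-step change. The only cosmetic differences are that you sharpen the product lower bound via the factorial $(k-1)!\,k^{-c_4(k-1)}$ (the paper just uses $(k^{-\Ocal(1)})^{k-1}$), and you obtain $w_j(t_0)\geq k^{-\Ocal(1)}$ for $j\neq j^*$ by bounding the last step, whereas the paper applies the gap inequality once more at $t_0$ to write $w_j(t_0)\geq w_{j^*}(t_0)+k^{-\Ocal(1)}\geq -\Ocal(1)\eta+k^{-\Ocal(1)}$.
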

It is important to note that the constants hidden in the $\Ocal(\cdot)$ notation
do not depend on $\eta$ (although they may depend on $C$). 
\begin{proof}
By \lemref{lem:gapincrease} and the theorem assumptions, the following holds 
for all iterations 
$t=1,2,\ldots,T$ where $\min_{i,t< T} w_i(t)\geq 0$: There exists a unique
$j^*=\arg\min_i w_i(1)$, $w_{j^*}(t)$ remains the unique smallest value 
among $w_1(t),\ldots,w_k(t)$, and its distance from any other coordinate (which 
was initially $k^{-\Ocal(1)}$) is 
	monotonically increasing in $t$. In 
	particular, for any $t< T$, 
	$\min_{j\neq j^*}w_j(t)\geq k^{-\Ocal(1)}$. As 
	a result, recalling that $y<0$, we have for all $t< T$ that
	\begin{align*}
	w_{j^*}(t+1)~&=~ w_{j^*}(t)-\eta\left(\prod_i 
	w_i(t)-y\right)\prod_{i\neq 
	j^*}w_i(t)~\leq~ w_{j^*}(t)-\eta\left(\prod_i 
	w_i(t)-y\right)\left(k^{-\Ocal(1)}\right)^{k-1}\\
	&\leq~ w_{j^*}(t)+\eta y\exp(-\tilde{\Ocal}(k))~\leq~
	w_{j^*}(t)-\eta\cdot\exp(-\tilde{\Ocal}(k))~.
	\end{align*}
	We also assume that initially $w_{j^*}(1)\leq \Ocal(1)$. 
	Therefore, after at most $t_0=\exp(\tilde{\Ocal}(k))/\eta$ 
	iterations, we will have $w_{j^*}(t_0)\leq 0$ for the first time. 
	
	It remains to show that $w_{j^*}(t_0)\geq -\Ocal(1)\cdot\eta$, as well as 
	the second bullet in the lemma. To that end, we note 
	that up till iteration $t_0$, for any $j$, both $w_j(t)$ and $\prod_{i\neq 
	j} w_i(t)$ are monotonically decreasing in 
		$t$, and moreover, $t_0>1$ (since $w_{j^*}(t_0)\leq 0$ and we assume 
		$w_{j^*}(1)>0$). Thus, by Assumption \ref{assump:pos},
	\begin{align*}
		w_{j^*}(t_0) &= w_{j^*}(t_0-1)-\eta\left(\prod_i 
		w_i(t_0-1)-y\right)\prod_{i\neq 
		j^*}w_i(t_0-1)\\
		&> 
		0-\eta\left(\prod_i w_i(1)-y\right)\prod_{i\neq j^*}w_i(1)
		~\geq~ -\eta\cdot \Ocal(1)~.
	\end{align*}
	Using this inequality, we have for any $j\neq j^*$
	\[
	w_j(t_0)~=~ w_{j^*}(t_0)+(w_j(t_0)-w_{j^*}(t_0)) ~\geq~ 
	-\eta\cdot\Ocal(1)+k^{-\Ocal(1)}~,
	\]
	which is at least $k^{-\Ocal(1)}$ if $\eta \leq k^{-C}$ for some 
	sufficiently large constant $C$. Finally, since $w_j(t)$ for any $j\neq 
	j^*$ is positive and monotonically decreasing up to iteration $t_0$, we have
	$w_j(t_0)\leq w_j(1)\leq \Ocal(1)$ and $\prod_{i\notin 
	\{j,j^*\}}w_i(t_0)\leq\prod_{i\notin \{j,j^*\}}w_i(1)\leq 
	\Ocal(1)$ by Assumption \ref{assump:pos}.
\end{proof}

\begin{lemma}\label{lem:phase2}
Under the conditions of \lemref{lem:phase1},
\begin{itemize}
\item There exists a unique $j^*=\arg\min_i w_i(t_0+1)$, and 
$w_{j^*}(t_0+1)~\leq~ -\eta\cdot 
\exp(-\tilde{\Ocal}(k))$.
\item $\min_{j\neq j^*} w_j(t_0+1)\geq k^{-\Ocal(1)}$ and 
$\prod_{i\neq j^*}w_i(t_0+1)\leq \Ocal(1)$.
\end{itemize}
\end{lemma}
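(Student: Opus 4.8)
The plan is to analyze the single gradient step from iteration $t_0$ to iteration $t_0+1$, feeding in all the structural information supplied by \lemref{lem:phase1}. Recall that for $F(\bw)=\frac12(\prod_i w_i-y)^2$ the update is $w_j(t+1)=w_j(t)-\eta\,(\prod_i w_i(t)-y)\prod_{i\neq j}w_i(t)$, so every coordinate's step shares the common scalar factor $g:=\prod_i w_i(t_0)-y$, which I would pin down first. Since $w_{j^*}(t_0)\in[-\Ocal(1)\eta,0]$ and $\prod_{i\neq j^*}w_i(t_0)\le\Ocal(1)$ (the latter from $\max_{j\neq j^*}w_j(t_0)\le\Ocal(1)$ together with $\max_j\prod_{i\notin\{j,j^*\}}w_i(t_0)\le\Ocal(1)$), we get $\prod_i w_i(t_0)\in[-\Ocal(1)\eta,0]$, and because $y=-c_1<0$ this gives $g\in[c_1-\Ocal(1)\eta,\ c_1]$; for $\eta\le k^{-C}$ with $C$ large this is $\Theta(1)$, in particular $g\ge c_1/2>0$.

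Next I would treat $j^*$ itself. Writing $w_{j^*}(t_0+1)=w_{j^*}(t_0)-\eta g\prod_{i\neq j^*}w_i(t_0)$ and using $w_{j^*}(t_0)\le 0$, $g\ge c_1/2$, and the lower bound $\prod_{i\neq j^*}w_i(t_0)\ge (k^{-\Ocal(1)})^{k-1}=\exp(-\tilde\Ocal(k))$ (each of the $k-1$ factors is at least $k^{-\Ocal(1)}$ by \lemref{lem:phase1}), we conclude $w_{j^*}(t_0+1)\le -\eta\exp(-\tilde\Ocal(k))$, which is the quantitative half of the first bullet. For the remaining coordinates $j\neq j^*$, note $\prod_{i\neq j}w_i(t_0)=w_{j^*}(t_0)\prod_{i\notin\{j,j^*\}}w_i(t_0)\le 0$ because $w_{j^*}(t_0)\le 0$; combined with $g>0$ this makes the increment $-\eta g\prod_{i\neq j}w_i(t_0)$ nonnegative, so $w_j(t_0+1)\ge w_j(t_0)\ge k^{-\Ocal(1)}>0$. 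This simultaneously gives $\min_{j\neq j^*}w_j(t_0+1)\ge k^{-\Ocal(1)}$ and, since $w_{j^*}(t_0+1)<0<w_j(t_0+1)$ for every $j\neq j^*$, that $j^*$ remains the unique argmin at $t_0+1$. (I would avoid invoking \lemref{lem:gapincrease} here, since $w_{j^*}(t_0)$ may already be negative; the direct one-step computation above sidesteps that.)

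The one step requiring genuine care is $\prod_{i\neq j^*}w_i(t_0+1)\le\Ocal(1)$, since all $k-1$ positive coordinates grow. Quantitatively the increment for $j\neq j^*$ is $-\eta g\,w_{j^*}(t_0)\prod_{i\notin\{j,j^*\}}w_i(t_0)\le \eta\cdot c_1\cdot \Ocal(1)\eta\cdot\Ocal(1)=\Ocal(1)\eta^2$ (using $|w_{j^*}(t_0)|\le\Ocal(1)\eta$), so $w_j(t_0+1)\le w_j(t_0)\bigl(1+\Ocal(1)\eta^2/w_j(t_0)\bigr)\le w_j(t_0)\bigl(1+\Ocal(1)\eta^2 k^{\Ocal(1)}\bigr)$. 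Multiplying over the $k-1$ coordinates gives $\prod_{i\neq j^*}w_i(t_0+1)\le \bigl(\prod_{i\neq j^*}w_i(t_0)\bigr)\exp\bigl(\Ocal(1)\eta^2 k^{\Ocal(1)+1}\bigr)$, and since $\prod_{i\neq j^*}w_i(t_0)\le\Ocal(1)$ and the choice $\eta\le k^{-C}$ with $C$ large makes the exponential factor at most $2$, the bound follows. I expect this product estimate to be the only delicate point; everything else is a direct one-step computation from the inequalities of \lemref{lem:phase1}, and the role of the polynomially small step size is precisely to keep this product from drifting.
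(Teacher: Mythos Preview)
Your proposal is correct and follows essentially the same approach as the paper: bound $\prod_i w_i(t_0)\in[-\Ocal(1)\eta,0]$ so that $\prod_i w_i(t_0)-y=\Theta(1)$, use sign analysis to show $w_{j^*}$ decreases while all other coordinates increase (hence $j^*$ stays the unique argmin and $\min_{j\neq j^*}w_j(t_0+1)\ge k^{-\Ocal(1)}$), lower-bound $\prod_{i\neq j^*}w_i(t_0)$ by $(k^{-\Ocal(1)})^{k-1}$ to get $w_{j^*}(t_0+1)\le -\eta\exp(-\tilde{\Ocal}(k))$, and bound each positive coordinate's increment by $\Ocal(1)\eta^2$ to control the product. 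The only cosmetic differences are your explicit naming of $g$ and your use of $\exp(\Ocal(1)\eta^2 k^{\Ocal(1)+1})$ where the paper writes $(1+\Ocal(1)/k)^{k-1}$.
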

\begin{proof}
By \lemref{lem:phase1}, we have $\prod_i w_i(t_0)\leq 0$, as well as $\prod_i 
w_i(t_0)= w_{j^*}(t_0)\cdot w_j(t_0)\cdot \prod_{i\notin \{j,j^*\}}w_i(t_0)\geq 
-\Ocal(1)\cdot \eta$ (where $j$ is arbitrary). This
implies 
that for sufficiently small $\eta$, $\frac{y}{2}\leq \prod_i w_i(t_0)\leq 0$.  
By definition of the gradient descent update, it follows that 
$w_{j^*}(t_0+1)\leq 
w_{j^*}(t_0)$ and for all $j\neq j^*$, 
$w_j(t_0+1)\geq 
w_j(t_0)$, which implies that $j^*$ remains the unique coordinate with smallest 
value as we move from iteration $t_0$ to iteration $t_0+1$, as well as  
$\min_{j\neq j^*} w_j(t_0+1)\geq \min_{j\neq 
j^*}w_j(t_0)\geq 
k^{-\Ocal(1)}$. 

We now turn to prove $w_{j^*}(t_0+1)~\leq~ -\eta\cdot 
\exp(-\tilde{\Ocal}(k))$. Using the fact that $\frac{y}{2}\leq \prod_i 
w_i(t_0)\leq 0$ as noted earlier,
\[
w_{j^*}(t_0+1) = w_{j^*}(t_0)-\eta\left(\prod_i w_i(t_0)-y\right)\prod_{i\neq 
j^*}w_i(t_0)~\leq~ 0+\eta\cdot\frac{y}{2}\cdot\left(k^{-\Ocal(1)}\right)^{k-1}
~\leq~ -\eta\cdot \exp(-\tilde{\Ocal}(k))~.
\]

Finally, to prove $\prod_{i\neq j^*} w_i(t_0+1)\leq \Ocal(1)$, we have by 
definition that for any $j\neq j^*$,
\[
w_{j}(t_0+1) ~=~ w_{j}(t_0)-\eta\left(\prod_i 
w_i(t_0)-y\right)w_{j^*}(t_0)\cdot\prod_{i\notin \{j,j^*\}}w_i(t_0)~.
\]
Using the fact that
$\frac{y}{2}\leq \prod_i w_i(t_0)\leq 0$ as shown earlier, and noting that by  
\lemref{lem:phase1}, $|w_{j^*}(t_0)|\leq \Ocal(1)\cdot \eta$ and 
$\left|\prod_{i\notin \{j,j^*\}}w_i(t_0)\right|\leq \Ocal(1)$, it follows from 
the displayed equation above that
$w_{j}(t_0+1)\leq w_{j}(t_0)+\Ocal(1)\cdot \eta^2$. Therefore,
\[
\prod_{i\neq j^*} w_i(t_0+1)~\leq~ \prod_{i\neq 
j^*}\left(w_i(t_0)+\Ocal(1)\cdot \eta^2\right)~\leq~ \prod_{i\neq j^*} 
\left(w_i(t_0)\left(1+\frac{\Ocal(1)\eta^2}{w_i(t_0)}\right)\right)~.
\]
Since $\min_{j\neq j^*} w_j(t_0)\geq k^{-\Ocal(1)}$ (where the $\Ocal(1)$ does 
not depend on $\eta$), then by picking $\eta\leq 
k^{-C}$ for a sufficiently large $C$, the above is at most
$
\left(\prod_{i\neq 
j^*}w_i(t_0)\right)\left(1+\frac{\Ocal(1)}{k}\right)^{k-1}~\leq~ 
\left(w_j(t_0)\prod_{i\notin\{j,j^*\}} w_i(t_0)\right)\cdot \Ocal(1)~\leq~ 
\Ocal(1)$,
where we used \lemref{lem:phase1} and where $j$ is arbitrary.
\end{proof}

\subsubsection{Putting Everything Together}

As discussed at the beginning of the proof, we can assume w.l.o.g. that 
$w_1(1),\ldots,w_k(1)$ are all positive (and in fact, $\min_i w_i(1)\geq 
k^{-\Ocal(1)}$ by our assumptions), and only consider the cases $y>\prod_i 
w_i(1)>0$ and $y <0<\prod_i w_i(1)$. 

\begin{itemize}
\item If $y>\prod_i 
w_i(1)>0$, we can apply 
\lemref{lem:phase3} with $\gamma = \delta = k^{-\Ocal(1)}$ and any $\eta = 
k^{-c}$ for some large enough constant $c$, to get a convergence 
to an $\epsilon$-optimal solution in 
$\exp(\tilde{\Ocal}(k))\cdot\max\{1,\log\left(1/\epsilon\right)\}$
iterations.
\item If $y<0<\prod_i w_i(1)$, and assuming $\eta=k^{-c}$ for some large enough 
constant 
$c>0$, 
then \lemref{lem:phase1} and \lemref{lem:phase2} together tell
us that after at most $\exp(\tilde{\Ocal}(k))$ iterations, we get to an 
iteration $t=t_0+1$ where $w_{j^*}(t)\leq 
-\exp(-\tilde{\Ocal}(k))$ for some 
$j^*$, 
$w_{j}(t)\geq k^{-\Ocal(1)}$ for all $j\neq j^*$, and $0> 
\prod_i w_i(t)\geq 
-\Ocal(1)\cdot \eta\geq -\Ocal(1)\cdot k^{-\Omega(1)}>y$ for large enough $k$. 
Therefore, by 
\lemref{lem:signswitch}, the dynamics of gradient descent from this time point 
is identical to case where we switch the signs of $y$ and $w_{j^*}$, so that 
$y>0$, $w_{j^*}(t)\geq \exp(-\tilde{\Ocal}(k))$,  
$w_j(t)\geq k^{-\Ocal(1)}$ for all $j\neq j^*$, and $y>\prod_i w_i(t)>0 
$ for large enough $k$. Now applying 
\lemref{lem:phase3} with 
$\delta=\exp(-\tilde{\Ocal}(k))$, $\gamma=k^{-\Ocal(1)}$, and any step size 
$\eta=k^{-c}$ for some large enough $c$, we get that 
$\exp(\tilde{\Ocal}(k))\cdot\max\{1,\log(1/\epsilon)\}$ additional iterations 
suffice for 
convergence. Overall, $\exp(\tilde{\Ocal}(k))\cdot\max\{1,\log(1/\epsilon)\}$ 
iterations are 
sufficient.
\end{itemize}

\bibliographystyle{plainnat}
\bibliography{bib}

\end{document}